\PassOptionsToPackage{table}{xcolor}
\documentclass{article}
\usepackage{iclr2025_conference,times}

\usepackage{amsmath,amsfonts,bm}

\def\eqref#1{equation~\ref{#1}}

\def\1{\bm{1}}

\DeclareMathAlphabet{\mathsfit}{\encodingdefault}{\sfdefault}{m}{sl}
\SetMathAlphabet{\mathsfit}{bold}{\encodingdefault}{\sfdefault}{bx}{n}

\usepackage{hyperref}
\hypersetup{hidelinks}
\usepackage{xurl}
\usepackage{booktabs}
\usepackage{multirow}
\usepackage{xcolor}
\usepackage{amssymb}
\usepackage{amsthm}
\usepackage{txfonts}
\usepackage{graphicx}
\usepackage{subcaption}
\usepackage{wrapfig}
\newtheorem{theorem}{Theorem}
\usepackage{threeparttable}
\usepackage{enumitem}
\usepackage{array}
\usepackage{iftex}
\usepackage{fancyhdr}
\iclrfinalcopy
\ifPDFTeX
  \newcommand{\pTwoSArtifactFont}{\ttfamily\small}
  \newcommand{\pTwoSArtifactSetSymbols}{}
  \DeclareUnicodeCharacter{2705}{\ensuremath{\checkmark}} %
  \DeclareUnicodeCharacter{2224}{\ensuremath{\nmid}}     %
\else
  \usepackage{fontspec}
  \newfontfamily\pTwoSArtifactFont{DejaVuSansMono.ttf}[Scale=0.88]
  \newfontfamily\pTwoSArtifactSymbolFont{FreeSerif.otf}
  \begingroup
  \catcode`✅=\active
  \catcode`∤=\active
  \gdef\pTwoSArtifactSymbolDefinitions{%
    \def✅{{\pTwoSArtifactSymbolFont\char"2705}}%
    \def∤{{\pTwoSArtifactSymbolFont\char"2224}}%
  }
  \endgroup
  \newcommand{\pTwoSArtifactSetSymbols}{%
    \fvset{codes={\catcode`✅=\active\catcode`∤=\active},%
      defineactive=\pTwoSArtifactSymbolDefinitions}%
  }
\fi
\usepackage{tcolorbox}
\tcbuselibrary{breakable,skins}
\usepackage{fvextra}
\definecolor{PTwoSSkillBlue}{HTML}{315C82}
\definecolor{PTwoSSkillGreen}{HTML}{237B57}
\newtcolorbox{PTwoSSkill}[2]{%
  enhanced,breakable,
  colback=#1!3!white,colframe=#1,colbacktitle=#1,coltitle=white,
  title={#2},title after break={#2\ (continued)},
  fonttitle=\sffamily\bfseries\small,
  boxrule=0.5pt,arc=1pt,
  left=6pt,right=6pt,top=5pt,bottom=5pt,
  before skip=9pt,after skip=10pt,pad at break=2mm,
  before upper={\pTwoSArtifactSetSymbols
    \fvset{formatcom=\pTwoSArtifactFont,
      breaksymbolleft={},breaksymbolright={}}}%
}

\usepackage{amsmath,amssymb}
\usepackage{booktabs,array,tabularx,longtable}
\usepackage{float}
\usepackage{iftex}
\ifPDFTeX
  \usepackage[T1]{fontenc}
  \usepackage[utf8]{inputenc}
  \newcommand{\pTwoSPromptFont}{\ttfamily}
\else
  \usepackage{fontspec}
  \newfontfamily\pTwoSPromptFont{DejaVuSansMono.ttf}[Scale=0.88]
\fi
\usepackage{tcolorbox}
\tcbuselibrary{breakable,skins}
\usepackage{fvextra}
\usepackage{hyperref}

\definecolor{PTwoSPromptBlue}{HTML}{315C82}
\newtcolorbox{PTwoSPrompt}[1]{%
  enhanced,breakable,
  colback=PTwoSPromptBlue!3!white,colframe=PTwoSPromptBlue,
  colbacktitle=PTwoSPromptBlue,coltitle=white,
  title={#1},title after break={#1\ (continued)},
  fonttitle=\sffamily\bfseries\small,
  boxrule=0.5pt,arc=1pt,
  left=6pt,right=6pt,top=5pt,bottom=5pt,
  before skip=9pt,after skip=10pt,pad at break=2mm,
  before upper={\fvset{formatcom=\pTwoSPromptFont,
    breaksymbolleft={},breaksymbolright={}}}%
}

\makeatletter
\@ifundefined{algorithm}{%
  \floatstyle{ruled}
  \newfloat{algorithm}{tbp}{loa}
  \floatname{algorithm}{Algorithm}
  \floatstyle{plain}
}{}
\makeatother

\newcommand{\method}{\textsc{Prompt2Skill}\xspace}

\title{Prompt2Skill: Unsupervised Skill Optimization From Natural Language Instructions}

\author{Bo Ni$^{1}$, Li Li$^{2}$, Ryan A. Rossi$^{3}$, Franck Dernoncourt$^{3}$, Tyler Derr$^{1}$ \\
$^{1}$Vanderbilt University \quad $^{2}$University of Southern California \quad $^{3}$Adobe Systems \\
\texttt{\{bo.ni,tyler.derr\}@vanderbilt.edu} \quad \texttt{lli@usc.edu} \\
\texttt{\{ryrossi,dernonco\}@adobe.com}
}

\usepackage{xspace}

\newtcolorbox{abstractbox}{
  enhanced,
  colback=blue!3,     %
  colframe=gray!40,   %
  boxrule=0.4pt,      %
  arc=2.5mm,          %
  left=5mm, right=5mm, top=4mm, bottom=4mm,
  width=\textwidth
}

\begin{document}
\maketitle
\lhead{Preprint. Under review.}

\begin{abstractbox}
\noindent
Skills are external artifacts that Large Language Models (LLMs) consume at inference time to improve their performance on specialized domains by incorporating relevant procedural and domain knowledge. Expert-authored skills are expensive to produce, and the resulting artifacts are not optimized for the specific model that consumes them, whose failure modes can vary with version, scale and training. In addition, emerging tasks may fall outside the scope of existing skill libraries, creating a need to develop new skills before curated training data become available. Recent works have explored automated skill optimization through reflection, but they require a curated, in-distribution training set, which users might not always have. To address these limitations, we present \textbf{Prompt2Skill}, a framework that builds skills from natural-language task description alone. From the prompt, the system derives a task specification, discovers or synthesizes datasets, and refines the skill in a closed loop of reflective editing. Across four domains spanning question answering, reading comprehension, spreadsheet manipulation, and mathematical reasoning, Prompt2Skill consistently outperforms the direct prompting baseline, achieving an average improvement of \textbf{10.8\%} across open-source and frontier models.
\medskip
\centerline{\textbf{Code:} \url{https://github.com/Arstanley/Prompt2Skill}}
\end{abstractbox}
\begin{figure}[h]
    \centering
    \vskip -0.5ex
    \includegraphics[width=\linewidth]{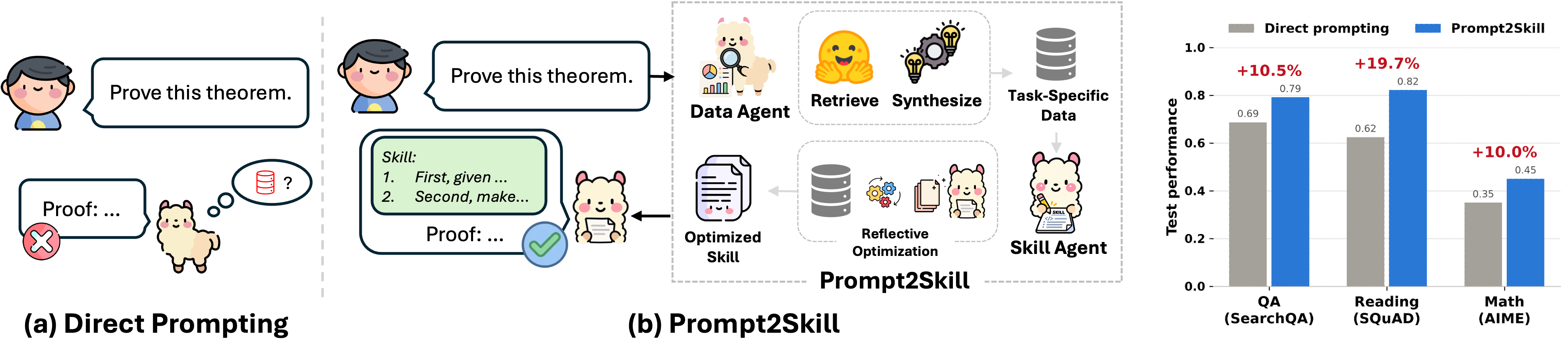}
    \vskip -1.5ex
    \caption{Prompt2Skill vs. Direct Prompting.}
    \vskip -2ex
    \label{fig:placeholder}
\end{figure}
\section{Introduction}
\vspace{-0.5ex}
Large Language Models are increasingly deployed as models equipped with skills, which are external artifacts that the model consumes at inference time to improve its performance on specialized tasks~\citep{anthropic2025agentskills}. Often, skills are human-readable markdown files placed into the model's context, equipping large language models with specific procedures, domain conventions, and tool-use knowledge that they do not reliably exhibit parametrically, and through which a general-purpose model can improve performance on a specialized task without any weight updates~\citep{xu2026agentskillslargelanguage,li2026skillsbenchbenchmarkingagentskills}. 

Today, skills are predominantly authored by human experts~\citep{li2026skillsbenchbenchmarkingagentskills, anthropic2025agentskills}, which limits scalability: each new task demands domain knowledge, familiarity with the skill format, and iteration against model behavior~\citep{ni2026trace2skilldistilltrajectorylocallessons}. To address this problem, recent work explores automated skill creation from agent experience. For example, Trace2Skill~\citep{ni2026trace2skilldistilltrajectorylocallessons} consolidates execution trajectories in parallel into a unified skill directory via inductive reasoning, compressing recurring failures and workarounds into standard operating procedures. SkillOpt~\citep{yang2026skilloptexecutivestrategyselfevolving}, on the other hand, casts skill creation as controllable text-space optimization: a separate optimizer model turns scored rollouts into bounded add/delete/replace edits on the skill document, accepting an edit only when it strictly improves a held-out validation score. However, these optimizers presuppose a curated, in-distribution set of training tasks with reliable labels — the very resource a deployed user lacks. Prior work shows that shrinking the training set to a single example collapses its gains to the level of a data-blind draft~\citep{yang2026skilloptexecutivestrategyselfevolving}, limiting a more realistic setting where a user who has only a task in mind, and perhaps a couple of examples: "I want a model that reads a wikipedia passage and answers questions about it."

To this end, we present \method, a multi-agent framework that builds an optimized skill from a natural-language task description alone. From the prompt, the Data Agent derives a task specification, including the input–output contract and answer format, and retrieves candidate datasets from online resources. To adapt to a wide range of requests, the agent then optionally processes or synthesizes data to fit the task: reformatting retrieved items into the task's interface, or generating grounded items when none does. The Skill Agent then refines the skill in a closed loop of rollouts and reflective editing: a reflector model proposes candidate edits from the target model's own failures, and each edit is accepted only if it wins a statistically significant paired comparison on a fresh batch of held-out items, with a frozen validation set consulted exactly once for final selection. 

We conduct extensive experiments on four domains, question answering, reading comprehension, spreadsheet manipulation, and mathematical reasoning, with models spanning open-source and commercial systems. Across configurations, \method improves performance by an average of 10.8\% points, on average, and never significantly regresses. We also compare against skills authored by a frontier LLM from the task description alone, demonstrating the effectiveness of incorporating data retrieval and optimization for skill authorization. In summary, our contributions can be summarized as follows:

\begin{itemize}
    \vspace{-1ex}
    \item \textbf{A prompt-to-skill framework.} We present \method, to our knowledge the first end-to-end multi-agent framework that turns a bare natural-language task description into a model-optimized skill, via agentic data discovery and synthesis followed by closed-loop refinement on the skill, with no weight updates and no user-supplied data.
    \vspace{-0.5ex}
    \item \textbf{A comprehensive empirical study.} Across four domains and five open-source and commercial target models, \method improves over direct prompting by 10.8\% points on average and never significantly regresses. Extensive ablations and case studies further isolate the contribution of each component.
\end{itemize}

\vspace{-1ex}
\section{Problem Definition}
\vspace{-0.5ex}
\label{sec:problem}
\paragraph{Skills.}
Let $M$ denote a Large Language Model with a finite token vocabulary $\Sigma$.
A \emph{skill} is a text artifact $s \in \Sigma^{*}$ that is placed in $M$'s
context at inference time, where $\Sigma^{*}$ is the set of all finite
sequences of tokens drawn from $\Sigma$. We write $M_{s}$ for the resulting
skill-equipped model, and $M_{\emptyset}$ for the model prompted
directly. A task is characterized by a distribution
$\mathcal{D}$ over input--output pairs together with an evaluation metric
$\mu$; the value of a skill is
\begin{equation}
    J(s) \;=\; \mathbb{E}_{(x,y)\sim\mathcal{D}}
    \left[\, \mu\!\left(M_{s}(x),\, y\right) \right].
\label{eq:objective}
\end{equation}

\vspace{-1ex}
\paragraph{Prompt-to-Skill.}
Prior work on automated skill authoring and optimization assumes access to a
labeled training set $\{(x_i, y_i)\}$ drawn from $\mathcal{D}$, on which
candidate skills can be scored and selected. We remove this assumption and
study a more general problem in which $\mathcal{D}$ is never observed: the
system receives no samples from the task distribution, and the task is
specified only through a natural language description. Specifically, let
$d \in \Sigma^{*}$ be a natural language description of the target task, for
example \emph{``I want a model that reads a passage and answers questions
about it.''} The description $d$ implicitly fixes both the task distribution
$\mathcal{D}$ and the metric $\mu$ in Eq.~\eqref{eq:objective}, but the
system observes neither. Optionally, $d$ is accompanied by a small set of
examples $\mathcal{E}$ that illustrate the intended input and output format.
The supervised setting of prior work is the special case in which
$\mathcal{E}$ is a large labeled sample from $\mathcal{D}$; we focus on the
regime where $\mathcal{E}$ is empty or holds only a handful of examples, too
few to score and select candidate skills reliably. Formally, a method for this problem is a procedure
$\mathcal{A}: (d, \mathcal{E}, M, \mathcal{R}, \mathcal{G}, B) \mapsto \hat{s}$,
and its quality is $J(\hat{s})$, which can be measured only after
optimization, on a test set drawn from $\mathcal{D}$.

\section{\method}
\begin{figure}
    \centering
    \includegraphics[width=0.95\linewidth]{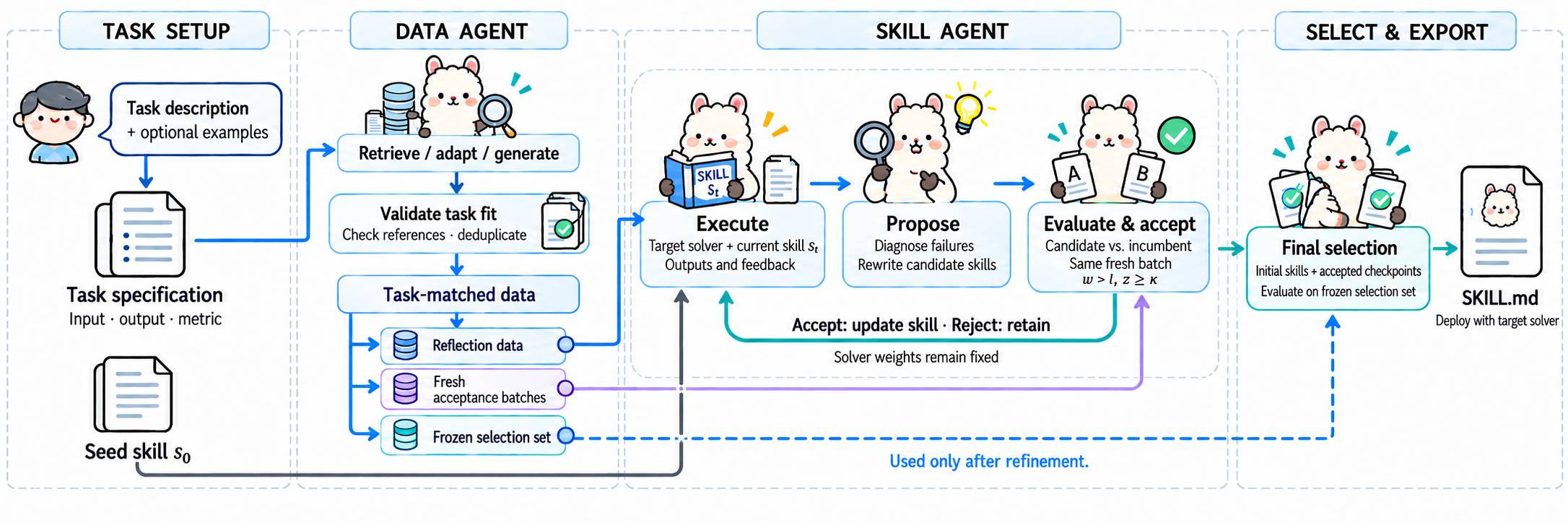}
    \caption{Overview of \method. The Data Agent turns the task description
    into a task specification, a seed skill, and a pool of validated items;
    the Skill Agent refines the seed in a closed loop of rollouts, reflective
    edits, and statistical acceptance on fresh batches. The final skill is
    chosen on a frozen validation set that is consulted exactly once.}
    \label{fig:framework}
\end{figure}

We present \method{}, an end-to-end framework that transforms a
natural-language task description and optional examples into a reusable
skill for a target model $M$. The framework consists of two agents.
The \textbf{Data Agent} translates the request into a task specification,
initializes a seed skill, and constructs a pool of validated items
through retrieval, processing, and grounded synthesis. The
\textbf{Skill Agent} refines the seed through repeated rollouts and
reflective edits, accepting a revision only when it passes a paired
statistical acceptance test on a fresh held-out batch. After refinement,
a frozen validation set is used in a single final selection stage.
The resulting skill is tailored to the specified task and target model
and can be deployed without updating the model's parameters.
Figure~\ref{fig:framework} summarizes the framework.
We describe task setup and the two agents below, and finally
Algorithm~\ref{alg:framework} recaps the complete procedure
\subsection{Task Setup}
\label{sec:task_setup}

The system receives a task description $d\in\Sigma^*$, optional examples
$\mathcal{E}$, and a target model $M$. Neither the target distribution
$\mathcal{D}$ nor its evaluation metric $\mu$ is directly available.
The setup stage therefore converts the request into an operational
task specification and an initial skill. Let $f_{\theta}(\cdot)$
denote an LLM-based operator conditioned on an instruction prompt
$\theta\in\Sigma^*$, with the underlying model parameters held fixed.
We jointly generate the specification and seed skill as
\begin{equation}
    (\tau,s_0)
    =
    f_{\theta_{\mathrm{setup}}}(d,\mathcal{E}),
    \qquad s_0\in\Sigma^*.
    \label{eq:task_setup}
\end{equation}
The prompt $\theta_{\mathrm{setup}}$ specifies the required output
schema and instructs the model to infer the task requirements from
$d$, when available, to clarify the intended
input and output format.

The specification $\tau$ records the input structure, task type,
language, expected answer style, and evaluation criterion.
Establishing these requirements before data acquisition provides
a fixed reference for assessing candidate sources and items.
The evaluation criterion determines an executable proxy metric
$\widehat{\mu}_{\tau}$, such as normalized exact matching, numeric
equivalence, or answer containment. We distinguish
$\widehat{\mu}_{\tau}$ from $\mu$: the former implements the system's
interpretation of the request, while the latter defines performance
on the target task.

The seed $s_0$ is a concise instruction that states the task and
required output format where the skill agent can optimize from. It is generated before inspecting acquired
data or observing model failures, so its content depends only on
the description and optional examples. The Data Agent uses $\tau$
to guide subsequent data acquisition and validation, while the
Skill Agent uses $s_0$ to initialize refinement.
The setup prompt $\theta_{\mathrm{setup}}$ remains fixed throughout
optimization; its complete instantiation, including the output
schema and optional example block, is provided in
Appendix~\ref{app:prompts}.

\subsection{Data Agent}
\label{sec:data_agent}

Given the task specification $\tau$ produced in the previous section, the data agent constructs a pool of input-reference pairs for skill optimization. Since the target distribution $\mathcal{D}$ is unobserved, data acquisition is guided by the requirement inferred from the requests. The agent will retrieve candidate sources, adapt their records to the required format, and generate additional items if needed. We discuss the agent in more details in the rest of this subsection.

\paragraph{Data Retrieval.}
Let $\mathcal{R}=\{r_1,\ldots,r_K\}$ denote the retrieval tools
available to the Data Agent. Each tool $r_k$ maps a query
$q\in\Sigma^*$ to a finite set of source identifiers, such as
dataset identifiers or article titles. Our implementation includes
interfaces to the Hugging Face dataset catalog and Wikipedia.
The framework specifies these interfaces, while the agent
determines which tools to invoke and what queries to issue.

Given the task specification $\tau$ and descriptions of the
available tools, the agent generates a set of retrieval requests:
\begin{equation}
    \mathcal{Q}_{\tau}
    =
    f_{\theta_{\mathrm{retrieve}}}(\tau,\mathcal{R})
    \subseteq \mathcal{R}\times\Sigma^*.
    \label{eq:retrieval_requests}
\end{equation}
Each pair $(r,q)\in\mathcal{Q}_{\tau}$ specifies a retrieval tool
and its query. Queries may express task categories or descriptive
keywords. To improve dataset discovery, the agent also generates
a hypothetical description of a suitable dataset and derives
search terms from it.

The requests are executed through the corresponding retrieval
interfaces, yielding the candidate source set
\begin{equation}
    \mathcal{C}_{\tau}
    =
    \bigcup_{(r,q)\in\mathcal{Q}_{\tau}} r(q).
    \label{eq:data_retrieval}
\end{equation}
Here, $f_{\theta_{\mathrm{retrieve}}}$ denotes LLM-based request
generation, whereas $r(q)$ denotes an external retrieval call.
The agent then inspects the returned sources, including their
schemas when available and samples of their content, to assess
their suitability for adaptation.

\paragraph{Data Adaptation and Generation.}
Retrieved sources may contain relevant information without directly
providing instances of the requested task. For example, if the user requests about a task on Wikipedia based question answering, the agent needs further data processing on the retrieved articles to fit the specific user requirements. They may use a different
input format, lack suitable reference answers, or omit required
artifacts. The Data Agent addresses these mismatches by adapting
existing records or generating new task instances from available
material.

For each source $b\in\mathcal{C}_{\tau}$, let $\mathcal{Z}_b$
denote its retrieved records and $h_b$ its available schema and
metadata. When the required information is present, the agent
inspects a sample
$\widetilde{\mathcal{Z}}_b\subseteq\mathcal{Z}_b$ and proposes
a conversion plan:
\begin{equation}
    \pi_b
    =
    f_{\theta_{\mathrm{adapt}}}
    \bigl(\tau,h_b,\widetilde{\mathcal{Z}}_b\bigr).
    \label{eq:adaptation_plan}
\end{equation}
The plan identifies the input and reference fields and specifies
how they should be assembled. An executable transformation
$T_{\pi_b}$ converts each usable record $z$ into an
input--reference pair $(x,y)$; records that cannot be converted
are discarded. For example, adapting a reading comprehension
dataset requires assembling the passage and question into $x$
and extracting the accepted answers as $y$. Here,
$f_{\theta_{\mathrm{adapt}}}$ uses an LLM to propose the plan,
while $T_{\pi_b}$ executes the specified conversion.

When structural conversion is insufficient, the agent generates
new items that satisfy the task requirements. For text tasks,
let $\mathcal{H}$ denote the retrieved material or examples
selected as generation context. Candidate items are produced as
\begin{equation}
    \mathcal{U}_{\mathrm{gen}}
    =
    f_{\theta_{\mathrm{gen}}}(\tau,\mathcal{H}).
    \label{eq:data_generation}
\end{equation}
Depending on the task, generation may derive questions and
references from retrieved passages or construct new self-contained
exercises using retrieved examples as a format reference.
When no suitable material is available, $\mathcal{H}$ is empty
and generation is guided by $\tau$. After adaptation or generation, all resulting items undergo validation before entering the optimization pool. The prompts $\theta_{\mathrm{adapt}}$ and
$\theta_{\mathrm{gen}}$ are provided in Appendix~\ref{app:prompts}.

\paragraph{Validation.}
Adapted and generated items are validated before entering the
optimization pool. Validation checks structural integrity, task
compatibility, and reference usability. Structural checks discard
items with empty inputs, missing references, or malformed fields.
For candidate datasets, the agent uses
$f_{\theta_{\mathrm{validate}}}$ to report the language, answer
style, task type, and input completeness of converted samples.
These observed properties are compared against the fixed
specification $\tau$. For example, a source containing questions
and answers is unsuitable for passage-based question answering
if its converted inputs omit the passages.

The automatic text pipeline additionally applies an answerability
screen using the fixed task-derived seed $s_0$. Let
$\mathcal{B}^{\mathrm{probe}}_b$ denote a small sample from source
$b$. Using the binary correctness evaluator
$\widehat{\mu}_{\tau}$, the source passes this screen when
\begin{equation}
    \sum_{(x,y)\in\mathcal{B}^{\mathrm{probe}}_b}
    \widehat{\mu}_{\tau}\!\left(M_{s_0}(x),y\right)
    > 0.
    \label{eq:source_screening}
\end{equation}
Sources on which the seed-equipped model fails every sampled
item are rejected. This heuristic screens for unusable references,
input mismatches, and tasks beyond the model's demonstrated
capabilities on the sample.

Synthesized text items undergo an additional consistency check.
An item $(x,y)$ passes when either the seed-equipped model or a
directly prompted model produces an accepted answer:
\begin{equation}
    \max_{s\in\{s_0,\emptyset\}}
    \widehat{\mu}_{\tau}\!\left(M_s(x),y\right)
    = 1.
    \label{eq:synthetic_screening}
\end{equation}

We write $a_{\tau}(x,y)=1$ when an item passes the applicable
source and item checks, and $a_{\tau}(x,y)=0$ otherwise.
These checks provide practical evidence of usability; solver
agreement alone does not establish reference correctness or
representativeness of the target distribution.
\paragraph{Pool Construction.}
Let $\mathcal{U}$ denote the candidate items obtained through
adaptation and generation, and let $a_{\tau}(x,y)\in\{0,1\}$
indicate whether an item passes the applicable source and item
checks. We summarize validation and duplicate filtering as
\begin{equation}
    \mathcal{P}
    =
    \operatorname{Dedup}
    \left(
        \left\{(x,y)\in\mathcal{U}:
        a_{\tau}(x,y)=1\right\}
    \right),
    \label{eq:validated_pool}
\end{equation}
where $\operatorname{Dedup}$ denotes deterministic filtering
based on the pipeline's item keys: normalized input keys for
text items and item identifiers for spreadsheet tasks.
The pool can be extended during refinement as additional
items are retrieved or generated.

The data serve three roles. At round $t$, the reflection set
$\mathcal{T}_t$ supplies model rollouts and failure examples
for proposing skill edits. A fresh acceptance batch
$\mathcal{B}_t$ supports comparisons between those proposals
and the incumbent skill. A frozen selection set $\mathcal{V}$
is established at initialization and retains the same membership
throughout optimization. Within each round,
\begin{equation}
    \mathcal{T}_t,\mathcal{B}_t
    \subseteq \mathcal{P}\setminus\mathcal{V},
    \qquad
    \mathcal{T}_t\cap\mathcal{B}_t=\varnothing.
    \label{eq:data_roles}
\end{equation}

For any nonempty batch $\mathcal{B}\subseteq\mathcal{P}$,
the empirical skill score is
\begin{equation}
    \widehat{J}_{\mathcal{B}}(s)
    =
    \frac{1}{|\mathcal{B}|}
    \sum_{(x,y)\in\mathcal{B}}
    \widehat{\mu}_{\tau}\!\left(M_s(x),y\right).
    \label{eq:proxy_objective}
\end{equation}
Acceptance compares the incumbent and proposed skills on the
same $\mathcal{B}_t$, evaluating edits on items beyond those
used to construct them. These scores guide optimization on the constructed proxy data.
Performance on the target distribution $\mathcal{D}$ is measured
afterward on the withheld target test set. The data acquisition
and validation prompts are provided in Appendix~\ref{app:prompts}.

\subsection{Skill Agent}
\label{sec:skill_agent}

Given the items pool $\mathcal{P}$, The Skill Agent refines an initial skill using reflective edits, and paired evaluation on fresh acceptance
batches. Each round produces candidate skills from observed failures,
evaluates them against the incumbent, and retains an update only
when it satisfies the acceptance criterion. After refinement,
a comparison on the frozen selection set determines
the exported skill.

\paragraph{Rollouts and Executions.}
Let $s_t$ denote the incumbent skill at round $t$, starting with
$s_1=s_0$ under task-derived initialization. Supplied or
data-informed initial skills can replace this starting incumbent.
For each item $(x_i,y_i)\in\mathcal{T}_t$, the agent executes
the skill-equipped model and evaluates the resulting output:
\begin{equation}
    o_{t,i}=M_{s_t}(x_i),
    \qquad
    r_{t,i}
    =
    \widehat{\mu}_{\tau}(o_{t,i},y_i).
    \label{eq:skill_rollouts}
\end{equation}
The implemented evaluators return binary outcomes
$r_{t,i}\in\{0,1\}$, whose average gives
$\widehat{J}_{\mathcal{T}_t}(s_t)$.

\paragraph{Reflective Editing.}
Following prior work~\citep{agrawal2026gepareflectivepromptevolution, yang2026skilloptexecutivestrategyselfevolving},
the Skill Agent uses rollout feedback to identify failure
patterns and propose revisions to the incumbent skill.
Let $\mathcal{F}_t$ denote the available feedback from
unsuccessful executions. This feedback can contain
input excerpts, reference answers, and the model's incorrect
answers.

For a sampled subset
$\mathcal{F}_{t,j}\subseteq\mathcal{F}_t$, the agent proposes
a candidate skill as
\begin{equation}
    s'_{t,j}
    =
    f_{\theta_{\mathrm{edit}}}
    \left(
        \tau,s_t,
        \widehat{J}_{\mathcal{T}_t}(s_t),
        \mathcal{F}_{t,j}
    \right).
    \label{eq:skill_edit}
\end{equation}

The resulting candidates form
$\mathcal{S}_t=\{s'_{t,1},\ldots,s'_{t,K_t}\}$,
which are passed to the acceptance step below.
The reflection prompts and authoring procedures are provided
in Appendix~\ref{app:prompts}.

\paragraph{Evaluation and Acceptance.}
The agent evaluates each candidate $s'_{t,j}$ and the incumbent
$s_t$ on the same fresh batch $\mathcal{B}_t$. Let $w_{t,j}$
count items solved only by the candidate and $\ell_{t,j}$
those solved only by the incumbent. A candidate qualifies when
$w_{t,j}>\ell_{t,j}$ and $z_{t,j}\geq\kappa$, where
\begin{equation}
    z_{t,j}
    =
    \begin{cases}
        \displaystyle
        \frac{|w_{t,j}-\ell_{t,j}|-1}
             {\sqrt{w_{t,j}+\ell_{t,j}}},
        & w_{t,j}+\ell_{t,j}>0,\\[6pt]
        0, & \text{otherwise}.
    \end{cases}
    \label{eq:acceptance_statistic}
\end{equation}
We use $\kappa=1$ as a per-candidate screening threshold.
Among qualifying candidates, the agent accepts the one with
the largest net gain $w_{t,j}-\ell_{t,j}$; if none qualifies,
it retains $s_t$. Accepted skills become the next incumbent
and are saved as checkpoints for final selection. The round
budget and stopping criteria are specified in Appendix~\ref{app:implementation-details}.

\subsection{Selection and Export}
\label{sec:selection}

Whenever a candidate is accepted and becomes the incumbent,
\method{} saves its skill text. Let $\mathcal{S}_{\mathrm{acc}}$
denote these previously accepted skills and
$\mathcal{S}_{\mathrm{init}}$ the initial skills retained as
fallback options. After refinement, all eligible skills are
evaluated on the frozen selection set $\mathcal{V}$.
For nonempty $\mathcal{V}$, the selected skill is
\vspace{-0.75ex}
\begin{equation}
\hat{s}
\in
\arg\max_{
s\in\mathcal{S}_{\mathrm{init}}
\cup\mathcal{S}_{\mathrm{acc}}
}
\widehat{J}_{\mathcal{V}}(s).
\label{eq:final_selection}
\end{equation}
The common selection set makes skills accepted on different
batches comparable. Selection scores are not fed into subsequent
refinement. The selected skill is exported as \texttt{SKILL.md}
and supplied to the target model at inference time.

Intuitively, terminal selection can recover an earlier improvement
when later edits generalize less well, while retaining initial
skills provides fallback options. Under an independent selection
sample and bounded proxy mismatch, we bound the performance gap
between the exported skill and the best retained skill.
Appendix~\ref{app:selection_guarantee} formalizes this guarantee
and its implications for the framework.

\vspace{-0.5ex}
\subsection{Algorithm}
\vspace{-0.5ex}

We present the \method{} algorithm in Algorithm~\ref{alg:framework}.
The procedure constructs task-matched proxy data and refines an
initial skill through execution feedback, reflective editing,
and paired acceptance on fresh batches.
After refinement, it compares the retained skills on a frozen
selection set and exports the selected artifact for inference.

\begin{algorithm}[t]
\scriptsize 
\caption{\method construction and refinement.}
\label{alg:framework}
\textbf{Input:} Task description $d$, optional examples $\mathcal E$,
target model $M$, acquisition tools, and refinement budget.\\
\textbf{Output:} A skill artifact $\hat s$.
\begin{enumerate}[leftmargin=6ex]
\setlength{\itemsep}{2pt}
\setlength{\parskip}{0pt}
\item Establish the task specification $\tau$ and proxy evaluator
$\widehat\mu_\tau$. Choose an initial incumbent $s$ and retain the
eligible initial skills $\mathcal S_{\mathrm{init}}$.
\item Retrieve or generate task items, adapt their representations, and
apply the applicable validation and duplicate checks. Establish the
frozen selection set $\mathcal V$ and the initial reflection pool.
Set $\mathcal S_{\mathrm{acc}}\leftarrow\varnothing$.
\item For each permitted refinement round $t$:
  \begin{enumerate}
  \setlength{\itemsep}{1pt}
  \item Acquire additional reflection items as required by the task driver
  and form $\mathcal T_t$, excluding selection items.
  \item Execute $M_s$ on $\mathcal T_t$ and compute its proxy score and
  failure feedback $\mathcal F_t$.
  \item Propose the ordered candidates
  $\mathcal S_t=(s'_{t,1},\ldots,s'_{t,K_t})$ using reflective editing.
  \item Acquire a fresh acceptance batch $\mathcal B_t$, excluding
  current reflection and selection items under the driver's item keys.
  Evaluate the incumbent once on $\mathcal B_t$.
  \item Evaluate every candidate on the same $\mathcal B_t$.
  Compute paired wins $w_{t,j}$, losses $\ell_{t,j}$, and the statistic
  $z_{t,j}$ in Eq.~\eqref{eq:appendix_acceptance}.
  \item Among candidates with $w_{t,j}>\ell_{t,j}$ and $z_{t,j}\geq1$,
  select the first candidate attaining the largest net gain
  $w_{t,j}-\ell_{t,j}$. If one exists, replace $s$ with its text and
  retain that text in $\mathcal S_{\mathrm{acc}}$.
  \item Record the round outcome and apply the driver's stopping rule.
  \end{enumerate}
\item When $\mathcal V$ is nonempty, evaluate the eligible initial skills,
the incumbent, and previously accepted skills on $\mathcal V$.
Select the highest-scoring entry using the driver's tie rule.
\item Export the selected text as \texttt{SKILL.md}.
\end{enumerate}
\end{algorithm}

\section{Experiment}
In this section, we evaluate whether \method{} can construct useful
skills from task descriptions across different tasks and target models,
following the prompt-to-skill setting in Section~\ref{sec:problem}.
Our experiments compare the resulting skills with direct prompting
and fixed skill baselines, and examine sensitivity to initialization
and optional input--output examples.
We also explore the application of \method{} to NLP AutoML,
using skill construction as an alternative to task-specific
fine-tuning. Preliminary experiments and analysis are provided
in Appendix~\ref{sec:nlp_automl}.

\subsection{Experimental Setup}
\label{sec:experimental_setup}

\paragraph{Task Prompts.}
We evaluate \method{} in the prompt-to-skill setting:
each task is specified through a natural-language description
$d$ of the desired behavior.
The main experiments use no user-provided examples.
For example, the reading-comprehension task is specified as
\emph{``I want a model that reads a passage and answers
questions about it.''}
This description guides task setup, proxy-data acquisition,
and skill refinement.
The resulting skill is then evaluated on the corresponding
held-out benchmark.
The exact task descriptions and the prompt templates used
by the framework are provided in
Appendix~\ref{app:prompts}.

\paragraph{Benchmarks.}
We evaluate four tasks spanning factual question answering
(SearchQA~\citep{dunn2017searchqanewqadataset}),
reading comprehension
(SQuAD~\citep{rajpurkar2016squad100000questionsmachine}),
mathematical reasoning (AIME~\citep{aime25}), and
spreadsheet manipulation
(SpreadsheetBench~\citep{ma2024spreadsheetbenchchallengingrealworld}).
These benchmarks measure the performance of the constructed
skills on their intended tasks.
Dataset splits, evaluation metrics, and execution settings
are provided in Appendix~\ref{app:datasets}.

\paragraph{Target Models.}
We consider three open-weight solvers:
Qwen3-8B, Qwen3-32B, and Llama-3.2-1B-Instruct;
and two commercial solvers:
Claude Haiku 4.5 and GPT-5.5.
The target model executes skills and supplies the outcomes
used during refinement, while GPT-5.5 provides reflective
feedback for the optimization runs.

\paragraph{Baselines.}
We compare \method against four baselines.
\textit{Direct} supplies the task instruction and required
output format without an additional skill.
\textit{Off-the-shelf} supplies an externally published skill
selected for domain relevance.
\textit{LLM-generated} uses a skill authored once by
Qwen3-32B from a domain description, without execution
feedback or iterative refinement. Details on the off-the-shelf skills and LLM-generated skills are provided in Appendix~\ref{app:off_the_shelf_skills}, and baselines details are provided in Appendix~\ref{app:baselines}.

\providecommand{\pmv}[2]{%
  \mbox{#1\,{\scriptsize$\pm$\,#2}}%
}

\providecommand{\modellabel}[1]{%
  \rotatebox[origin=c]{90}{%
    \scriptsize #1%
  }%
}

\providecommand{\splitmodellabel}[2]{%
  \rotatebox[origin=c]{90}{%
    \shortstack[c]{%
      \scriptsize #1\\[-1pt]
      \scriptsize #2%
    }%
  }%
}

\providecommand{\avgain}[2]{\cellcolor{green!#1}#2}
\providecommand{\avgloss}[2]{\cellcolor{red!#1}#2}

\begin{table*}[t]
\centering
\scriptsize  
\setlength{\tabcolsep}{3.8pt}
\renewcommand{\arraystretch}{1.10}

\caption{
Main results comparing direct prompting, existing general-purpose skills
(\textit{Off-the-shelf}), skills generated directly by an LLM
(\textit{LLM-generated}), and \method{} (P2S).
Llama-3.2-1B was not able to complete any SpreadsheetBench task.
}
\label{tab:main_results}
\vspace{-0.75ex}

\begin{threeparttable}

\begin{tabular}{@{}
    >{\centering\arraybackslash}p{1.35em}
    >{\centering\arraybackslash}p{2.3em}
    l
    cccc
    c
@{}}

\toprule
& & Method
& SearchQA
& SQuAD
& AIME
& Spreadsheet
& Avg.\ $\Delta$ \\
\midrule

\multirow[c]{12}{*}{%
  \rotatebox[origin=c]{90}{\textsc{Open-source}}%
}

&
\multirow[c]{4}{*}{\splitmodellabel{Qwen3}{8B}}
& Direct
& \pmv{0.431}{0.002}
& \pmv{\underline{0.738}}{0.001}
& \pmv{\textbf{0.208}}{0.014}
& \pmv{0.044}{0.005}
& 0.0\% \\

&
& Off-the-shelf
& \pmv{0.496}{0.003}
& \pmv{0.657}{0.004}
& \pmv{0.128}{0.013}
& \pmv{\textbf{0.085}}{0.007}
& \avgain{15}{+14.7\%} \\

&
& LLM-generated
& \pmv{\underline{0.557}}{0.003}
& \pmv{0.611}{0.001}
& \pmv{0.158}{0.017}
& \pmv{\underline{0.082}}{0.011}
& \avgain{19}{+18.6\%} \\

&
& P2S
& \pmv{\textbf{0.640}}{0.019}
& \pmv{\textbf{0.768}}{0.097}
& \pmv{\underline{0.175}}{0.000}
& \pmv{0.079}{0.011}
& \avgain{29}{\textbf{+29.1\%}} \\

\cmidrule(lr){2-8}

&
\multirow[c]{4}{*}{\splitmodellabel{Qwen3}{32B}}
& Direct
& \pmv{\underline{0.613}}{0.002}
& \pmv{\underline{0.694}}{0.002}
& \pmv{\underline{0.211}}{0.017}
& \pmv{0.152}{0.023}
& 0.0\% \\

&
& Off-the-shelf
& \pmv{0.537}{0.005}
& \pmv{0.630}{0.003}
& \pmv{0.136}{0.013}
& \pmv{\underline{0.180}}{0.012}
& \avgloss{10}{-9.7\%} \\

&
& LLM-generated
& \pmv{0.608}{0.002}
& \pmv{0.632}{0.002}
& \pmv{0.178}{0.013}
& \pmv{0.179}{0.026}
& \avgloss{2}{-1.9\%} \\

&
& P2S
& \pmv{\textbf{0.778}}{0.019}
& \pmv{\textbf{0.825}}{0.009}
& \pmv{\textbf{0.222}}{0.017}
& \pmv{\textbf{0.233}}{0.019}
& \avgain{26}{\textbf{+26.1\%}} \\

\cmidrule(lr){2-8}

&
\multirow[c]{4}{*}{\splitmodellabel{Llama-3.2}{1B}}
& Direct
& \pmv{0.140}{0.003}
& \pmv{\underline{0.207}}{0.002}
& \pmv{\underline{0.017}}{0.000}
& N/A
& 0.0\%\tnote{c} \\

&
& Off-the-shelf
& \pmv{0.120}{0.004}
& \pmv{0.048}{0.001}
& \pmv{\underline{0.017}}{0.000}
& N/A
& \avgloss{30}{-30.4\%} \\

&
& LLM-generated
& \pmv{\underline{0.273}}{0.007}
& \pmv{0.090}{0.001}
& \pmv{\textbf{0.031}}{0.005}
& N/A
& \avgain{40}{+40.3\%} \\

&
& P2S
& \pmv{\textbf{0.340}}{0.064}
& \pmv{\textbf{0.492}}{0.048}
& \pmv{\underline{0.017}}{0.014}
& N/A
& \avgain{60}{\textbf{+93.5\%}} \\

\midrule

\multirow[c]{8}{*}{%
  \rotatebox[origin=c]{90}{\textsc{Commercial}}%
}

&
\multirow[c]{4}{*}{\splitmodellabel{Claude}{Haiku 4.5}}
& Direct
& \pmv{\underline{0.857}}{0.000}
& \pmv{0.416}{0.000}
& \pmv{0.467}{0.014}
& \pmv{\underline{0.297}}{0.010}
& 0.0\% \\

&
& Off-the-shelf
& \pmv{\textbf{0.891}}{0.000}
& \pmv{0.294}{0.001}
& \pmv{\underline{0.511}}{0.017}
& \pmv{0.271}{0.014}
& \avgloss{6}{-6.2\%} \\

&
& LLM-generated
& \pmv{0.854}{0.000}
& \pmv{\underline{0.499}}{0.000}
& \pmv{\textbf{0.517}}{0.008}
& \pmv{\textbf{0.300}}{0.005}
& \avgain{8}{+7.8\%} \\

&
& P2S
& \pmv{0.839}{0.000}
& \pmv{\textbf{0.793}}{0.004}
& \pmv{0.508}{0.014}
& \pmv{0.269}{0.012}
& \avgain{22}{\textbf{+22.0\%}} \\

\cmidrule(lr){2-8}

&
\multirow[c]{4}{*}{\modellabel{GPT-5.5}}
& Direct
& \pmv{0.839}{0.004}
& \pmv{\underline{0.651}}{0.002}
& \pmv{0.525}{0.000}
& \pmv{\underline{0.266}}{0.009}
& 0.0\% \\

&
& Off-the-shelf
& \pmv{0.855}{0.003}
& \pmv{0.512}{0.004}
& \pmv{\textbf{0.914}}{0.013}
& \pmv{0.264}{0.026}
& \avgain{14}{+13.5\%} \\

&
& LLM-generated
& \pmv{\textbf{0.876}}{0.004}
& \pmv{0.425}{0.016}
& \pmv{\underline{0.883}}{0.008}
& \pmv{\textbf{0.276}}{0.016}
& \avgain{10}{+10.4\%} \\

&
& P2S
& \pmv{\underline{0.868}}{0.016}
& \pmv{\textbf{0.852}}{0.020}
& \pmv{0.861}{0.005}
& \pmv{0.263}{0.009}
& \avgain{24}{\textbf{+24.3\%}} \\

\bottomrule
\end{tabular}

\end{threeparttable}
\vskip -1ex
\end{table*}
\subsection{Main Results}
\label{sec:main_results}
\vspace{-0.75ex}

We report our main results in Table~\ref{tab:main_results}.
\method{} achieves the highest average relative improvement
for every target model, with gains ranging from 22.0\% to
29.1\% for the Qwen and commercial models, and 93.5\% for
Llama-3.2-1B over its three reported benchmarks.
Although \method{} does not achieve the best score in every
setting, its improvements are the most consistent
across the evaluated settings. It also achieves the highest mean score in 10 pairs, including
all five solvers on SQuAD and all four benchmarks for
Qwen3-32B.

Fixed skills can also produce substantial regressions.
For example, the off-the-shelf skill reduces
Llama-3.2-1B's SQuAD score from 0.207 to 0.048, whereas
\method{} raises its SQuAD score to 0.492.
These results illustrate that a skill's usefulness depends
on the target model and task, and
the fixed baselines are constructed or selected without
feedback from target-model executions.

\vspace{-0.75ex}
\paragraph{Notes on Data Overlap and Test Leakage.}
An audit of archived proxy data found no lexical overlap flags
or workbook-hash matches against the target tests;
source details and coverage appear in
Appendix~\ref{app:data_overlap}.

\vspace{-0.75ex}
\subsection{Ablation Studies}
\label{sec:ablation}
\vspace{-0.75ex}

\paragraph{Sensitivity to Initialization.}
Figure~\ref{fig:ablation_seed} compares initializing \method{} with its
default seed skill $s_0$ or an off-the-shelf domain skill on SearchQA and
SQuAD, using Qwen3-8B, Qwen3-32B, and GPT-5.5. Neither initialization
consistently outperforms the other. For example, the off-the-shelf seed
improves SQuAD performance for Qwen3-8B but reduces it for GPT-5.5.
Overall, the two initializations yield broadly comparable performance,
suggesting that \method{} does not depend on a particular seed skill.

\vspace{-0.75ex}
\paragraph{Optional Input--Output Examples.}
We compare zero, one, and four demonstrations with Qwen3-8B on
SearchQA and SQuAD (Figure~\ref{fig:ablation_demonstration}).
Direct prompting uses examples at inference, while \method{} uses
them only during skill construction. Additional examples do not
consistently improve performance: direct prompting on SQuAD improves
from 70.5\% with zero examples to 79.0\% with four, whereas
\method{} remains nearly unchanged on SearchQA between one and
four examples (61.1\% versus 60.9\%). These results support treating
user-provided demonstrations as optional, since \method{} retrieves
task-relevant data to support skill optimization.

\paragraph{Comparing to Skill Optimization}
\begin{wraptable}[7]{r}{0.43\linewidth}
    \centering
    \footnotesize
    \setlength{\tabcolsep}{3pt}
    \caption{Matched-budget results.}
    \vspace{-1.25ex}
    \label{tab:skillopt_comparison}
    \begin{tabular}{@{}lccc@{}}
    \toprule
    Method & \shortstack{SearchQA\\EM}
           & \shortstack{SQuAD\\EM}
           & \shortstack{SQuAD\\F1} \\
    \midrule
    SkillOpt & 45.79          & 81.30          & 90.50 \\
    P2S      & \textbf{48.71} & \textbf{82.50} & \textbf{91.11} \\
    \bottomrule
    \end{tabular}
\end{wraptable}

In order to assess whether the gains extend beyond proxy-data
acquisition, we compare \method{} with SkillOpt~\citep{yang2026skilloptexecutivestrategyselfevolving}
using the same proxy data and initial skill. Both methods use
Qwen3-8B as the solver and GPT-5.5 as the editor, with a cap of
7,400 solver evaluations per task. As shown in
Table~\ref{tab:skillopt_comparison}, \method{} improves SearchQA
EM by 2.93 percentage points and SQuAD EM/F1 by 1.20/0.61 points.
These results support the overall optimization and selection
procedure when the target distribution is unknown, showing the effectiveness of the proposed optimization procedure.

\begin{figure}[t]
    \centering
    \begin{subfigure}[t]{0.48\linewidth}
        \centering
        \includegraphics[width=\linewidth]{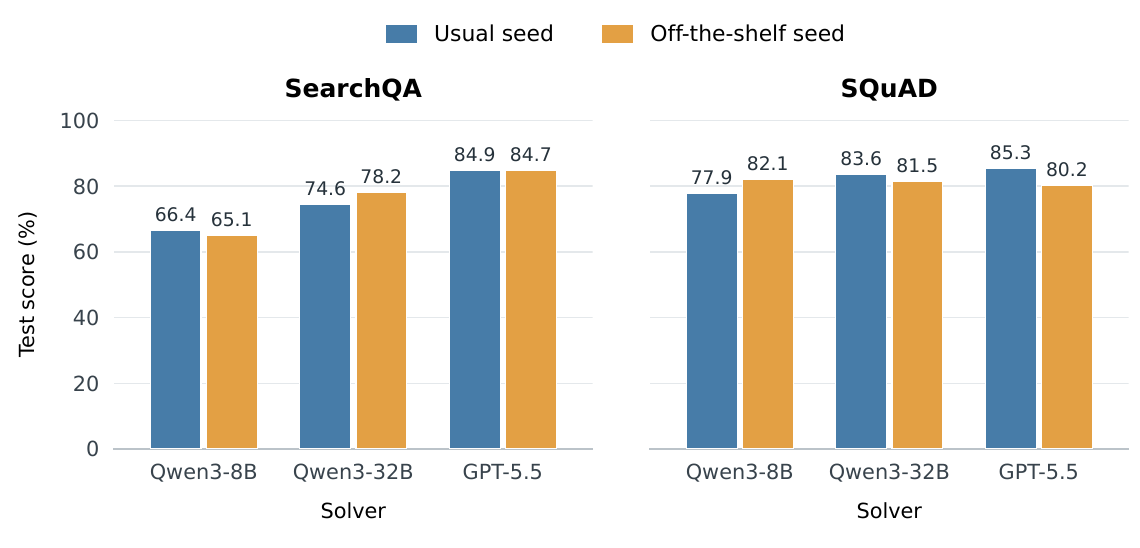}
        \caption{Sensitivity to seed initialization.}
        \label{fig:ablation_seed}
    \end{subfigure}\hfill
    \begin{subfigure}[t]{0.48\linewidth}
        \centering
        \includegraphics[width=\linewidth]{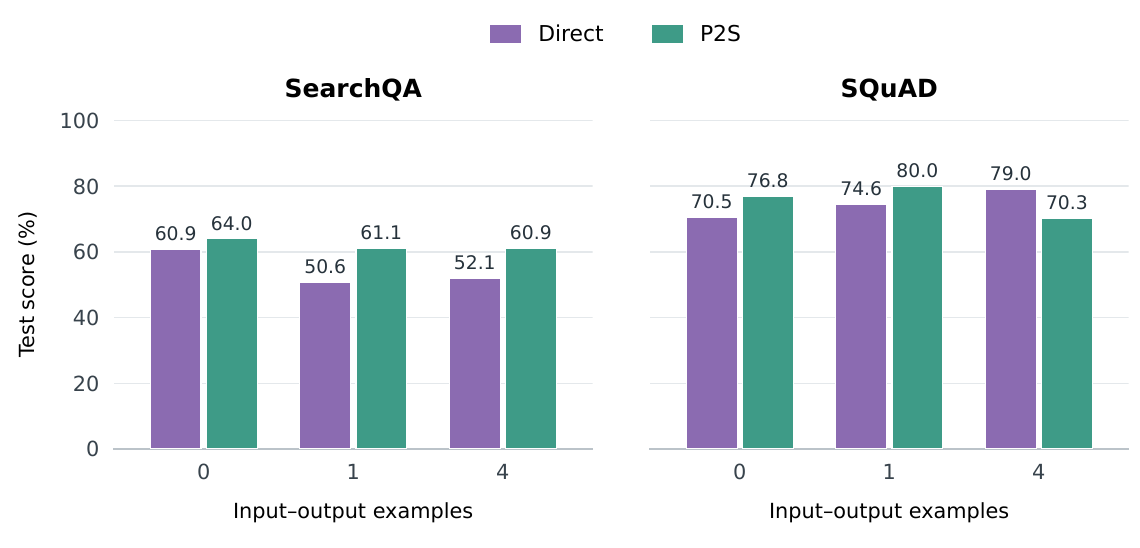}
        \caption{Effect of optional demonstrations.}
        \label{fig:ablation_demonstration}
    \end{subfigure}
    \caption{Ablations on SearchQA and SQuAD. Scores are percentages;
    higher is better. Each ablation condition uses one run. Zero-example
    \method{} scores in (b) are three-run means from the main table;
    the matched zero-example runs failed during data acquisition.}
    \label{fig:combined}
\end{figure}

\vspace{-0.5ex}
\section{Related Work}
\vspace{-0.5ex}
\paragraph{Skill Optimization.}
Automatic prompt optimization improves model behavior by
searching over textual instructions.
APE~\citep{zhou2023largelanguagemodelshumanlevel} generates and selects instruction candidates,
while ProTeGi~\citep{pryzant2023automaticpromptoptimizationgradient} uses natural-language critiques
of prediction errors to guide prompt edits.
GEPA~\citep{agrawal2026gepareflectivepromptevolution} combines reflection on execution feedback
with Pareto-based candidate search.
Recent work extends this approach to reusable agent skills:
Trace2Skill~\citep{ni2026trace2skilldistilltrajectorylocallessons} consolidates lessons from
execution trajectories into transferable skill directories,
while SkillOpt~\citep{yang2026skilloptexecutivestrategyselfevolving} refines skill documents through
bounded edits and validation-gated updates.

\vspace{-0.5ex}
\paragraph{AutoML.}
Recent AutoML systems use language models to automate
machine-learning workflows.
Prompt2Model~\citep{viswanathan2023prompt2modelgeneratingdeployablemodels}
converts task descriptions into deployable models through
dataset and model retrieval, synthetic data generation,
and supervised fine-tuning.
AutoML-Agent~\citep{trirat2025automlagentmultiagentllmframework} coordinates
specialized agents to automate the pipeline from data
retrieval to model deployment, using retrieval-augmented
planning and multi-stage verification.
MLZero~\citep{fang2025mlzeromultiagentendtoendmachine} combines multimodal data
interpretation with memory-guided code generation,
while AIDE~\citep{jiang2025aideaidrivenexplorationspace} formulates machine-learning
engineering as tree search over executable solutions.

\vspace{-0.5ex}
\section{Conclusion}
\vspace{-0.5ex}
In this paper, we introduced \method{}, a framework that converts
natural-language task descriptions into reusable skills for frozen
language models, without requiring a supplied target-task training set.
The Data Agent constructs task-matched proxy data through retrieval,
adaptation, and generation, while the Skill Agent refines skills
through execution feedback, reflective editing, and statistical
screening on fresh batches.
Across four benchmarks and five target models, \method{} achieves
average relative improvements of 36.1\% over direct prompting and
83.0\% over off-the-shelf skills, averaged across the 19 evaluated
model--benchmark pairs, demonstrating that task descriptions alone
can guide the automated construction of effective skills for
frozen language models.

\section{AI Use Disclosure}
In this work, we used generative AI tools for brainstorming research idea, implementing part of the codebase, and formulate theoretical proof. We have not used generative AI tools for autoresearch, propose hypothesis, or provide feedback on methodology, and the rest of the required disclosure tasks are not applicable to this work. Additionally, we used generative AI tools for draft part of the research paper, create or modify the teaser figure and images, and summarize existing landscape in the research area that is within our interests. We have reviewed all AI-assisted work. We checked LLM-generated research ideas for potential plagiarism through a manual literature survey, and audited the claims and code written by LLMs. We take responsibility for the final content of this work, including text, claims or artifacts produced with the aid of generative AI.

\bibliography{iclr2025_conference}

@misc{xu2026agentskillslargelanguage,
      title={Agent Skills for Large Language Models: Architecture, Acquisition, Security, and the Path Forward}, 
      author={Renjun Xu and Yang Yan},
      year={2026},
      eprint={2602.12430},
      archivePrefix={arXiv},
      primaryClass={cs.MA},
      url={https://arxiv.org/abs/2602.12430}, 
}

@misc{yang2026skilloptexecutivestrategyselfevolving,
      title={SkillOpt: Executive Strategy for Self-Evolving Agent Skills}, 
      author={Yifan Yang and Ziyang Gong and Weiquan Huang and Qihao Yang and Ziwei Zhou and Zisu Huang and Yan Li and Xuemei Gao and Qi Dai and Bei Liu and Kai Qiu and Yuqing Yang and Dongdong Chen and Xue Yang and Chong Luo},
      year={2026},
      eprint={2605.23904},
      archivePrefix={arXiv},
      primaryClass={cs.AI},
      url={https://arxiv.org/abs/2605.23904}, 
}

@misc{li2026skillsbenchbenchmarkingagentskills,
      title={SkillsBench: Benchmarking How Well Agent Skills Work Across Diverse Tasks}, 
      author={Xiangyi Li and Yimin Liu and Wenbo Chen and Bingran You and Zonglin Di and Yifeng He and Shenghan Zheng and Kyoung Whan Choe and Jiankai Sun and Shuyi Wang and Chujun Tao and Binxu Li and Xuandong Zhao and Hejia Geng and Xiaojun Wu and Junwei Zhou and Xiaokun Chen and Hanwen Xing and Yubo Li and Qunhong Zeng and Di Wang and Yuanli Wang and Roey Ben Chaim and Penghao Jiang and Haotian Shen and Luyang Kong and Xinyi Liu and Runhui Wang and Xuanqing Liu and Jiachen Li and Xin Lan and Yueqian Lin and Wengao Ye and Junwei He and Songlin Li and Yue Zhang and Yipeng Gao and Yijiang Li and Ze Ma and Liqiang Jing and Tianyu Wang and Kaixin Li and Yiqi Xue and Haoran Lyu and Yizhuo He and Yuchen Tian and Shutong Wu and Bowei Wang and Yixuan Gao and Bo Chen and Litong Liu and Sikai Cheng and Jiajun Bao and Shuaicheng Tong and Shuwen Xu and Terry Yue Zhuo and Tinghan Ye and Qi Qi and Miao Li and Longtai Liao and Zelin Tan and Chang Shi and Xilin Tang and Srinath Tankasala and Boqin Yuan and Yaoyao Qian and Jianhong Tu and Chenguang Wang and Yizhou Sun and Wei Wang and Aaron Taylor and Ziyue Yang and Changkun Guan and Zhikang Dong and Xinyu Zhang and Steven Dillmann and Han-chung Lee and Dawn Song},
      year={2026},
      eprint={2602.12670},
      archivePrefix={arXiv},
      primaryClass={cs.AI},
      url={https://arxiv.org/abs/2602.12670}, 
}

@misc{anthropic2025agentskills,
  author       = {{Anthropic}},
  title        = {Equipping Agents for the Real World with {Agent Skills}},
  year         = {2025},
  month        = oct,
  howpublished = {Anthropic Engineering Blog},
  url          = {https://www.anthropic.com/engineering/equipping-agents-for-the-real-world-with-agent-skills},
  note         = {Accessed 14 August 2026}
}

@misc{ni2026trace2skilldistilltrajectorylocallessons,
      title={Trace2Skill: Distill Trajectory-Local Lessons into Transferable Agent Skills}, 
      author={Jingwei Ni and Yihao Liu and Xinpeng Liu and Yutao Sun and Mengyu Zhou and Pengyu Cheng and Dexin Wang and Erchao Zhao and Xiaoxi Jiang and Guanjun Jiang},
      year={2026},
      eprint={2603.25158},
      archivePrefix={arXiv},
      primaryClass={cs.AI},
      url={https://arxiv.org/abs/2603.25158}, 
}

@misc{dunn2017searchqanewqadataset,
      title={SearchQA: A New Q\&A Dataset Augmented with Context from a Search Engine},
      author={Matthew Dunn and Levent Sagun and Mike Higgins and V. Ugur Guney and Volkan Cirik and Kyunghyun Cho},
      year={2017},
      eprint={1704.05179},
      archivePrefix={arXiv},
      primaryClass={cs.CL},
      url={https://arxiv.org/abs/1704.05179}, 
}

@misc{rajpurkar2016squad100000questionsmachine,
      title={SQuAD: 100,000+ Questions for Machine Comprehension of Text}, 
      author={Pranav Rajpurkar and Jian Zhang and Konstantin Lopyrev and Percy Liang},
      year={2016},
      eprint={1606.05250},
      archivePrefix={arXiv},
      primaryClass={cs.CL},
      url={https://arxiv.org/abs/1606.05250}, 
}

@misc{ma2024spreadsheetbenchchallengingrealworld,
      title={SpreadsheetBench: Towards Challenging Real World Spreadsheet Manipulation}, 
      author={Zeyao Ma and Bohan Zhang and Jing Zhang and Jifan Yu and Xiaokang Zhang and Xiaohan Zhang and Sijia Luo and Xi Wang and Jie Tang},
      year={2024},
      eprint={2406.14991},
      archivePrefix={arXiv},
      primaryClass={cs.CL},
      url={https://arxiv.org/abs/2406.14991}, 
}

@misc{aime25,
      title={American Invitational Mathematics Examination (AIME) 2025}, 
      author={Zhang, Yifan and Math-AI, Team},
      year={2025},
}

@misc{viswanathan2023prompt2modelgeneratingdeployablemodels,
      title={Prompt2Model: Generating Deployable Models from Natural Language Instructions}, 
      author={Vijay Viswanathan and Chenyang Zhao and Amanda Bertsch and Tongshuang Wu and Graham Neubig},
      year={2023},
      eprint={2308.12261},
      archivePrefix={arXiv},
      primaryClass={cs.CL},
      url={https://arxiv.org/abs/2308.12261}, 
}

@misc{zhou2023largelanguagemodelshumanlevel,
      title={Large Language Models Are Human-Level Prompt Engineers}, 
      author={Yongchao Zhou and Andrei Ioan Muresanu and Ziwen Han and Keiran Paster and Silviu Pitis and Harris Chan and Jimmy Ba},
      year={2023},
      eprint={2211.01910},
      archivePrefix={arXiv},
      primaryClass={cs.LG},
      url={https://arxiv.org/abs/2211.01910}, 
}

@misc{pryzant2023automaticpromptoptimizationgradient,
      title={Automatic Prompt Optimization with "Gradient Descent" and Beam Search}, 
      author={Reid Pryzant and Dan Iter and Jerry Li and Yin Tat Lee and Chenguang Zhu and Michael Zeng},
      year={2023},
      eprint={2305.03495},
      archivePrefix={arXiv},
      primaryClass={cs.CL},
      url={https://arxiv.org/abs/2305.03495}, 
}

@misc{fang2025mlzeromultiagentendtoendmachine,
      title={MLZero: A Multi-Agent System for End-to-end Machine Learning Automation}, 
      author={Haoyang Fang and Boran Han and Nick Erickson and Xiyuan Zhang and Su Zhou and Anirudh Dagar and Jiani Zhang and Ali Caner Turkmen and Cuixiong Hu and Huzefa Rangwala and Ying Nian Wu and Bernie Wang and George Karypis},
      year={2025},
      eprint={2505.13941},
      archivePrefix={arXiv},
      primaryClass={cs.MA},
      url={https://arxiv.org/abs/2505.13941}, 
}

@misc{jiang2025aideaidrivenexplorationspace,
      title={AIDE: AI-Driven Exploration in the Space of Code}, 
      author={Zhengyao Jiang and Dominik Schmidt and Dhruv Srikanth and Dixing Xu and Ian Kaplan and Deniss Jacenko and Yuxiang Wu},
      year={2025},
      eprint={2502.13138},
      archivePrefix={arXiv},
      primaryClass={cs.AI},
      url={https://arxiv.org/abs/2502.13138}, 
}

@misc{trirat2025automlagentmultiagentllmframework,
      title={AutoML-Agent: A Multi-Agent LLM Framework for Full-Pipeline AutoML}, 
      author={Patara Trirat and Wonyong Jeong and Sung Ju Hwang},
      year={2025},
      eprint={2410.02958},
      archivePrefix={arXiv},
      primaryClass={cs.LG},
      url={https://arxiv.org/abs/2410.02958}, 
}

@misc{agrawal2026gepareflectivepromptevolution,
      title={GEPA: Reflective Prompt Evolution Can Outperform Reinforcement Learning}, 
      author={Lakshya A Agrawal and Shangyin Tan and Dilara Soylu and Noah Ziems and Rishi Khare and Krista Opsahl-Ong and Arnav Singhvi and Herumb Shandilya and Michael J Ryan and Meng Jiang and Christopher Potts and Koushik Sen and Alexandros G. Dimakis and Ion Stoica and Dan Klein and Matei Zaharia and Omar Khattab},
      year={2026},
      eprint={2507.19457},
      archivePrefix={arXiv},
      primaryClass={cs.CL},
      url={https://arxiv.org/abs/2507.19457}, 
}
\bibliographystyle{iclr2025_conference}

\appendix

\section{Reliable Selection after Adaptive Skill Refinement}
\label{app:selection_guarantee}

Refinement produces skills using execution feedback and comparisons
on different acceptance batches. The final incumbent need not be
the best skill encountered during this process.
This motivates retaining initial and previously accepted skills
and comparing them on a separate selection sample.
The following result bounds the loss from selecting among these
retained skills. Candidate construction may be adaptive, provided
that it remains independent of the selection sample.

\begin{theorem}[Final selection under proxy mismatch]
\label{thm:selection}
Fix the task specification $\tau$, its evaluator
$\widehat{\mu}_{\tau}$, and a proxy distribution
$\widetilde{\mathcal{D}}$ over input--reference pairs.
Let
$\mathcal{S}=(s^{(1)},\ldots,s^{(K)})$
be a nonempty finite collection of eligible skill entries,
constructed independently of the selection sample
$\mathcal{V}\sim\widetilde{\mathcal{D}}^{n}$,
where $n\geq 1$.
If identical skill texts are evaluated multiple times,
each evaluated entry is counted separately in $K$.

Assume that $\mu$ and $\widehat{\mu}_{\tau}$ take values
in $[0,1]$ and that
\begin{equation}
\begin{aligned}
\operatorname{TV}
(\mathcal{D},\widetilde{\mathcal{D}})
&\leq \epsilon_{\mathrm{dist}},\\
\max_{s\in\mathcal{S}}
\mathbb{E}_{(x,y)\sim\widetilde{\mathcal{D}}}
\left[
\left|
\mu(M_s(x),y)
-\widehat{\mu}_{\tau}(M_s(x),y)
\right|
\right]
&\leq \epsilon_{\mathrm{eval}},
\end{aligned}
\label{eq:selection_assumptions}
\end{equation}
where $\operatorname{TV}(P,Q)=\sup_A|P(A)-Q(A)|$.
For stochastic solvers, expectations also include inference
randomness. Selection evaluations use fresh inference randomness,
independent across items and of candidate construction.

Then, for any $\delta\in(0,1)$, the selected skill
$\hat{s}\in\arg\max_{s\in\mathcal{S}}
\widehat{J}_{\mathcal{V}}(s)$ satisfies, with probability
at least $1-\delta$,
\begin{equation}
\begin{aligned}
J(\hat{s})
\geq\;&
\max_{s\in\mathcal{S}}J(s)
-2\bigl(
\epsilon_{\mathrm{dist}}
+\epsilon_{\mathrm{eval}}
\bigr)\\
&-
\sqrt{\frac{2\log(2K/\delta)}{n}}.
\end{aligned}
\label{eq:selection_guarantee}
\end{equation}
\end{theorem}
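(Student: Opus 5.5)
The plan is a standard three-term decomposition: the true objective $J$ is first related to the population proxy objective $\widetilde{J}(s)=\mathbb{E}_{(x,y)\sim\widetilde{\mathcal{D}}}[\widehat{\mu}_{\tau}(M_s(x),y)]$, and $\widetilde{J}$ is then related to the empirical selection score $\widehat{J}_{\mathcal{V}}$ by a uniform concentration bound over the $K$ entries. The first relation is deterministic and holds for every $s\in\mathcal{S}$. The second is the only probabilistic step.

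\textbf{Step 1 (proxy mismatch).} For each $s\in\mathcal{S}$, write
\[
J(s)-\widetilde{J}(s)=\bigl(\mathbb{E}_{\mathcal{D}}[\mu_s]-\mathbb{E}_{\widetilde{\mathcal{D}}}[\mu_s]\bigr)+\mathbb{E}_{\widetilde{\mathcal{D}}}[\mu_s-\widehat{\mu}_{\tau,s}],
\]
where $\mu_s(x,y)=\mu(M_s(x),y)$ and $\widehat{\mu}_{\tau,s}(x,y)=\widehat{\mu}_{\tau}(M_s(x),y)$, and where inference randomness is integrated out so that both are $[0,1]$-valued functions of $(x,y)$.

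The first term is bounded by $\epsilon_{\mathrm{dist}}$. This uses the fact that for $f$ with values in $[0,1]$, $|\mathbb{E}_P f-\mathbb{E}_Q f|\le \operatorname{TV}(P,Q)$, which follows from the layer-cake identity $\mathbb{E}f=\int_0^1 P(f>t)\,dt$. The second term is bounded by $\epsilon_{\mathrm{eval}}$ directly from the assumption in Eq.~\eqref{eq:selection_assumptions}. Hence
\[
|J(s)-\widetilde{J}(s)|\le \epsilon_{\mathrm{dist}}+\epsilon_{\mathrm{eval}} \quad\text{for all } s\in\mathcal{S}.
\]

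\textbf{Step 2 (concentration).} Condition on $\mathcal{S}$, which is legitimate because $\mathcal{S}$ is constructed independently of $\mathcal{V}$. For each fixed entry $s^{(k)}$, the score $\widehat{J}_{\mathcal{V}}(s^{(k)})$ is an average of $n$ i.i.d. $[0,1]$ variables with mean $\widetilde{J}(s^{(k)})$. Independence holds because items are i.i.d. and inference randomness is fresh and independent across items.

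Hoeffding's inequality then gives
\[
|\widehat{J}_{\mathcal{V}}(s^{(k)})-\widetilde{J}(s^{(k)})|\le \sqrt{\log(2K/\delta)/(2n)}
\]
with probability at least $1-\delta/K$. A union bound over the $K$ entries (with duplicates counted separately, as the statement stipulates) makes this hold simultaneously for all entries with probability at least $1-\delta$. Call this common deviation $\eta$. Since $2\eta=\sqrt{2\log(2K/\delta)/n}$, this matches the last term of Eq.~\eqref{eq:selection_guarantee}.

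\textbf{Step 3 (chain).} Let $s^\star\in\arg\max_{s\in\mathcal{S}} J(s)$. On the good event,
\[
J(\hat s)\ge \widetilde{J}(\hat s)-\epsilon\ge \widehat{J}_{\mathcal{V}}(\hat s)-\eta-\epsilon\ge \widehat{J}_{\mathcal{V}}(s^\star)-\eta-\epsilon\ge \widetilde{J}(s^\star)-2\eta-\epsilon\ge J(s^\star)-2\eta-2\epsilon,
\]
where $\epsilon=\epsilon_{\mathrm{dist}}+\epsilon_{\mathrm{eval}}$. The middle inequality is the optimality of $\hat s$ for $\widehat{J}_{\mathcal{V}}$. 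This is exactly Eq.~\eqref{eq:selection_guarantee}.

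The main obstacle is the rigor of Step 2. Because $\hat s$ is data-dependent, Hoeffding cannot be applied to it directly; this is handled by the union bound over the fixed entries. It also needs the conditioning on the adaptively built $\mathcal{S}$, which is valid only because of the stated independence from $\mathcal{V}$. I will state that conditional argument explicitly, including that the per-item evaluation outcomes are conditionally i.i.d. once the skill text is fixed.
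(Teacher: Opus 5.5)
Your proposal is correct and follows the paper's proof essentially step for step. Both arguments use the same proxy-mismatch bound $|J(s)-\widetilde{J}(s)|\le\epsilon_{\mathrm{dist}}+\epsilon_{\mathrm{eval}}$, the same conditioning on $\mathcal{S}$ followed by Hoeffding plus a union bound at deviation $\sqrt{\log(2K/\delta)/(2n)}$, and the same comparison chain through $s^\star$. Your only additions are an explicit layer-cake justification of the total-variation step and the explicit integration of inference randomness, both of which the paper simply asserts.
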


\paragraph{Proof intuition.}
The argument separates proxy mismatch from selection error.
Distribution and evaluator mismatch bound the difference between
each skill's target value and its population proxy value.
An independent selection sample then controls estimation error
uniformly over the retained candidates.
Together, these bounds relate the empirically selected skill
to the best retained skill on the target distribution.

\begin{proof}
Condition on the realized candidate collection $\mathcal{S}$.
By independence, $\mathcal{V}$ remains an i.i.d.\ sample from
$\widetilde{\mathcal{D}}$.
Define
\[
\widetilde{J}(s)
=
\mathbb{E}_{(x,y)\sim\widetilde{\mathcal{D}}}
\left[
\widehat{\mu}_{\tau}(M_s(x),y)
\right],
\qquad
\epsilon_{\mathrm{proxy}}
:=
\epsilon_{\mathrm{dist}}+\epsilon_{\mathrm{eval}}.
\]
For every $s\in\mathcal{S}$, boundedness of the true metric
and the total-variation assumption give
\begin{equation}
\begin{aligned}
|J(s)-\widetilde{J}(s)|
\leq\;&
\left|
\mathbb{E}_{\mathcal{D}}[\mu(M_s(x),y)]
-
\mathbb{E}_{\widetilde{\mathcal{D}}}[\mu(M_s(x),y)]
\right|\\
&+
\mathbb{E}_{\widetilde{\mathcal{D}}}
\left[
\left|
\mu(M_s(x),y)
-\widehat{\mu}_{\tau}(M_s(x),y)
\right|
\right]\\
\leq\;& \epsilon_{\mathrm{proxy}}.
\end{aligned}
\label{eq:population_proxy_gap}
\end{equation}

For each candidate, its selection score averages $n$ independent
observations in $[0,1]$. Hoeffding's inequality and a union bound
therefore yield
\[
\Pr\left(
\max_{s\in\mathcal{S}}
\left|
\widehat{J}_{\mathcal{V}}(s)-\widetilde{J}(s)
\right|>\alpha
\right)
\leq 2K\exp(-2n\alpha^2).
\]
Taking
$\alpha=\sqrt{\log(2K/\delta)/(2n)}$
makes this probability at most $\delta$.

On the resulting event, let
$s^\star\in\arg\max_{s\in\mathcal{S}}J(s)$.
The empirical optimality of $\hat{s}$ implies
\[
\begin{aligned}
J(\hat{s})
&\geq
\widehat{J}_{\mathcal{V}}(\hat{s})
-\alpha-\epsilon_{\mathrm{proxy}}\\
&\geq
\widehat{J}_{\mathcal{V}}(s^\star)
-\alpha-\epsilon_{\mathrm{proxy}}\\
&\geq
J(s^\star)-2\alpha-2\epsilon_{\mathrm{proxy}}.
\end{aligned}
\]
Substituting $\alpha$ proves the bound.
The argument holds conditional on every candidate collection
satisfying the assumptions, so it also covers adaptive candidate
construction independent of $\mathcal{V}$.
\end{proof}

\paragraph{Implications for the framework.}
Let $\eta_n$ denote the total error allowance in
Eq.~\eqref{eq:selection_guarantee}.
Retaining an initial skill $s_0$ gives
\[
J(\hat{s})\geq J(s_0)-\eta_n.
\]
If refinement discovers a retained skill $s^+$ satisfying
$J(s^+)\geq J(s_0)+\gamma$, then
\[
J(\hat{s})\geq J(s_0)+\gamma-\eta_n.
\]
Thus, terminal selection can preserve an improvement discovered
anywhere along the retained trajectory, up to the stated error
allowance. This motivates keeping the previously accepted skills, rather than exporting only the last
incumbent. It is worth noting that distribution and evaluator mismatch from the data agent remain separate sources of
error.

\section{Application on NLP AutoML}
\label{sec:nlp_automl}
\begin{wrapfigure}{R}{0.5\columnwidth}
    \centering
    \vspace{-6pt}

    \includegraphics[width=\linewidth]{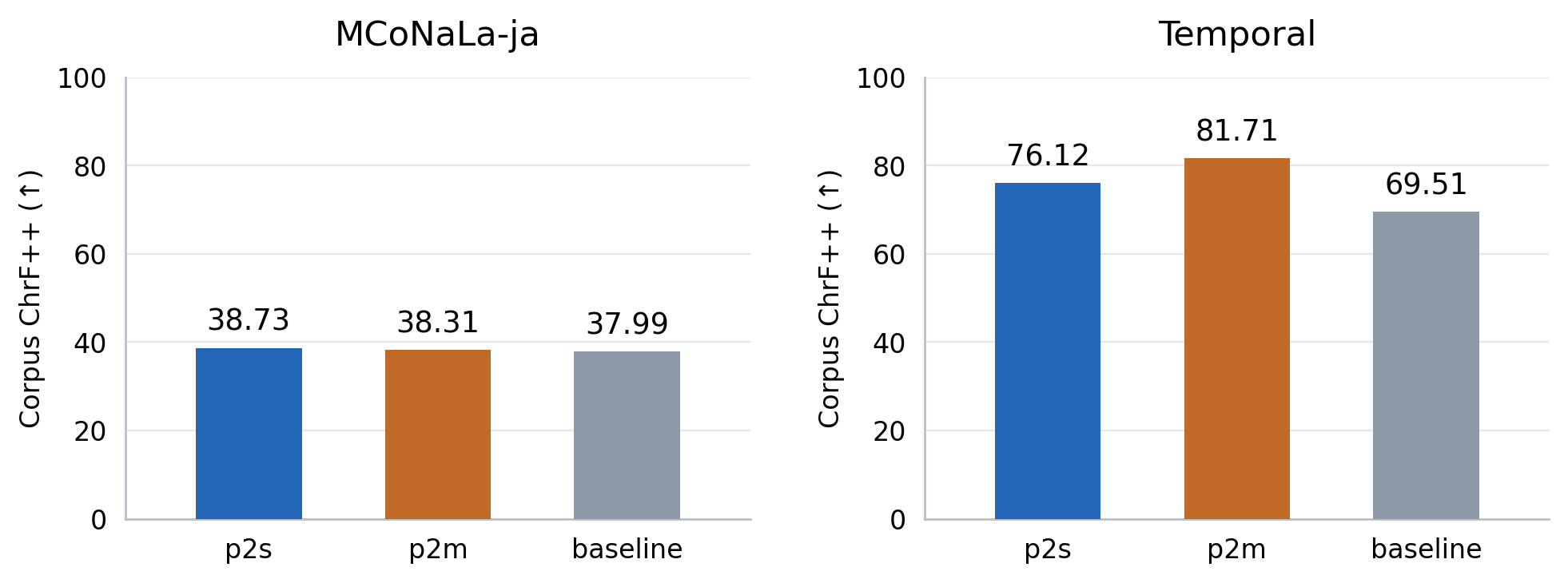}

    \vspace{4pt}
    \caption{AutoML Performance. P2S stands for \method, while P2M stands for Prompt2Model~\citep{viswanathan2023prompt2modelgeneratingdeployablemodels}.}
    \label{fig:automl_nlp}
    \vspace{-6pt}
\end{wrapfigure}
In addition to creating reusable agent skills, Prompt2Skill can be
viewed as an alternative approach to task specialization in NLP
AutoML. Prior work~\citep{viswanathan2023prompt2modelgeneratingdeployablemodels}
introduced Prompt2Model, which converts natural-language task
descriptions into task-specific models through dataset and model
retrieval, synthetic data generation, and supervised fine-tuning.
Prompt2Skill instead produces a reusable skill that guides a frozen
language model, enabling task adaptation without updating its weights.

We explore this application on Japanese-to-Python code generation
(MCoNaLa-ja) and temporal expression normalization (Temporal).
We compare P2S with a modified Prompt2Model baseline that generates
up to 1,000 training examples and fine-tunes a fixed Qwen-32B model. P2S uses Qwen3-32B and
receives a task description with three demonstrations. Both methods are scored on 207 MCoNaLa-ja examples and a 400-example Temporal subset, following ~\cite{viswanathan2023prompt2modelgeneratingdeployablemodels}. Baseline is the plain Qwen-32B model. Figure~\ref{fig:automl_nlp} shows comparable scores for P2S in this
exploratory comparison. These results illustrate the potential of skill-based task
adaptation as a strong alternative to NLP autoML tasks.

\section{Data Sources and Test-Overlap Audit}
\label{app:data_overlap}

\method{} requires no user-supplied training set, but permits automatically
acquired labeled supervision. One concern is that the evaluation might provide an opportunity for hacking the metrics by retrieving the exact data that the framework is evaluated on. We thus audit the retrieved data source across datasets and report the results in Table~\ref{tab:data_overlap_summary}.  

\begin{table}[htbp]
\centering
\small
\setlength{\tabcolsep}{4pt}
\renewcommand{\arraystretch}{1.15}
\caption{Recorded proxy sources and retrospective overlap findings.
Dataset identifiers refer to Hugging Face repositories. Text flags are
exact, containment, or near-text matches under the checks described below.
Workbook matches use two SHA-256 checks; a dash denotes not applicable.
Zero detected matches apply only to the audited artifacts.}
\label{tab:data_overlap_summary}
\begin{tabularx}{\linewidth}{@{}l>{\raggedright\arraybackslash}Xrr@{}}
\toprule
Task & Recorded sources & \shortstack{Text\\flags} & \shortstack{Workbook\\matches} \\
\midrule
SearchQA & \nolinkurl{openaccess-ai-collective/jeopardy} & 0 & --- \\
SQuAD & \nolinkurl{allenai/sciq}; \nolinkurl{hotpotqa/hotpot_qa} & 0 & --- \\
AIME & \nolinkurl{nvidia/OpenMathInstruct-2}; \nolinkurl{SuperSecureHuman/competition_math_hf_dataset} & 0 & --- \\
SpreadsheetBench & Generated workbook tasks & 0 & 0 \\
\bottomrule
\end{tabularx}
\end{table}

\paragraph{Source records.}
Table~\ref{tab:data_overlap_summary} lists configured or discovered source
banks, not verified item-level attribution. For SQuAD, HotpotQA was recorded but had no
bulk-fetch progress in most runs. Source cursors indicate attempted fetching;
they do not identify how many items from each source were retained.

\paragraph{Text checks.}
We normalize text using HTML unescaping, Unicode NFKC normalization,
case folding, and word-token extraction. We check exact questions and
full-question containment, with a minimum of four tokens and 20 characters
for containment. For questions of at least eight tokens, near-text checks
use five-token-shingle Jaccard similarity of at least $0.8$, or a
\texttt{SequenceMatcher} character ratio of at least $0.9$; the latter
also requires a length ratio of at least $0.8$. Near-question comparisons are
restricted to pairs sharing a three-token shingle with the stored input.
For passages of at least 20 tokens, we check full containment or coverage
of at least $80\%$ of the target passage's five-token shingles.
Matching answers alone does not trigger a flag. No audited input was flagged.

\paragraph{Workbook checks.}
The separate spreadsheet audit covers all 945 items in four archived
generation pools, including the initial 40-item draft-construction pool.
It compares 1,884 proxy workbook files with 560 target files from 280 held-out
tasks, using whole-file SHA-256 and a second SHA-256 over sorted workbook
XML and relationship files under \texttt{xl/}. The second check ignores
ZIP metadata and document properties, but not XML serialization differences.
No instruction flags or workbook-hash matches were found. Three manifest
items have no declared workbook files and receive text checks only.

\section{Implementation Details}
\label{app:implementation-details}

\paragraph{Paired acceptance.}
We provide further details on the acceptance rule in
Eq.~\eqref{eq:acceptance_statistic}.
At round $t$, each candidate is compared with the incumbent
on the same fresh batch $\mathcal{B}_t$. This paired evaluation
measures improvements and regressions on identical items,
avoiding differences caused by evaluating skills on separate
batches. The incumbent's outcomes are computed once and
reused across candidate comparisons.

For acceptance item $i$, let
$r_{t,i},r'_{t,j,i}\in\{0,1\}$ denote the correctness of the
incumbent and candidate $j$, respectively.
We count candidate-only successes as wins and incumbent-only
successes as losses:
\begin{equation}
\begin{aligned}
w_{t,j}
&=\sum_{i=1}^{|\mathcal{B}_t|}
r'_{t,j,i}(1-r_{t,i}),\\
\ell_{t,j}
&=\sum_{i=1}^{|\mathcal{B}_t|}
r_{t,i}(1-r'_{t,j,i}).
\end{aligned}
\end{equation}
Items on which both skills succeed or both fail contribute
neither a win nor a loss. Consequently, the empirical accuracy
gain is $(w_{t,j}-\ell_{t,j})/|\mathcal{B}_t|$.

The acceptance statistic scales the win--loss imbalance by
the number of items on which the outcomes differ:
\begin{equation}
z_{t,j}=
\begin{cases}
\dfrac{|w_{t,j}-\ell_{t,j}|-1}
{\sqrt{w_{t,j}+\ell_{t,j}}},
& w_{t,j}+\ell_{t,j}>0,\\[6pt]
0,&\text{otherwise}.
\end{cases}
\label{eq:appendix_acceptance}
\end{equation}
The subtraction of $1$ is a continuity correction that reduces
the statistic for small win--loss differences.
A candidate qualifies only when $w_{t,j}>\ell_{t,j}$ and
$z_{t,j}\geq\kappa$, with $\kappa=1$.
The first condition establishes the direction of improvement;
the second screens the evidence supporting that improvement.
Among qualifying candidates, the agent accepts the one with
the largest net gain $w_{t,j}-\ell_{t,j}$, breaking ties in
candidate-generation order. If none qualifies, the incumbent
is retained.

Fresh batches separate acceptance from the examples used to
propose the current edits. However, $\kappa=1$ is a
per-candidate screening threshold, not a conventional
$5\%$ significance criterion, and the implementation does
not adjust for comparisons across candidates or rounds.
The separate frozen selection set provides a common basis
for choosing among the retained skills after refinement.

\section{Experimental Details}
\label{app:datasets}

\subsection{Benchmark Inputs and Evaluation}
\label{app:benchmark_metrics}

The main experiments construct skills from task descriptions with
$\mathcal{E}=\varnothing$. Benchmark training examples are not supplied
as optimization data. The resulting skills are evaluated on the fixed
target subsets described below. The task descriptions are supplied in Appendix~\ref{app:task_descriptions}. These evaluation subsets are separated
from the retrieved or generated items used for reflection, acceptance, and final selection.

\paragraph{SearchQA~\citep{dunn2017searchqanewqadataset}.}
We use the 1,400-item test partition of the SearchQA split following prior work~\citep{yang2026skilloptexecutivestrategyselfevolving}. The input contains the
question alone; the search snippets provided by the original benchmark
are omitted to match the request for a Jeopardy-question answerer.
We extract the last \texttt{<answer>} block, or use the complete response
when that block is absent. Answers are lowercased, stripped of articles
and punctuation, and whitespace-normalized. 

\paragraph{SQuAD~\citep{rajpurkar2016squad100000questionsmachine}.}
We evaluate 1,000 examples from the SQuAD v1.1 validation
split, obtained from Hugging Face.\footnote{\url{https://huggingface.co/datasets/rajpurkar/squad}}
We shuffle example indices with seed~0 and retain the first
1,000, caching this subset for every method.
Each input contains the full passage followed by its question.
The extracted answer is compared with every accepted reference span.
Our scorer lowercases text, removes articles, replaces
non-alphanumeric characters with spaces, and collapses whitespace;
an item passes if the resulting prediction equals at least one
normalized reference.
We report this exact-match score. Token F1 is also recorded
but is not the main-table metric.

\paragraph{AIME~\citep{aime25}.}
We use 90 problems from the AI-MO AIME validation
collection\footnote{\url{https://huggingface.co/datasets/AI-MO/aimo-validation-aime}}
and 30 from Math-AI's AIME 2025
collection.\footnote{\url{https://huggingface.co/datasets/math-ai/aime25}}
All 120 problems are reserved for target evaluation,
regardless of their original partition labels.
The input is the problem statement. We extract the last
\texttt{<answer>} block, falling back to the last
\texttt{\textbackslash boxed\{\}} expression or the final
response line.
The scorer removes whitespace, currency delimiters, commas,
and leading zeros before comparing the prediction with the
reference. It also accepts numerically equivalent
representations within $10^{-6}$.
We report the fraction of correctly answered problems.

\paragraph{SpreadsheetBench~\citep{ma2024spreadsheetbenchchallengingrealworld}.}
We obtain the verified 400-task SpreadsheetBench collection
from the Trace2Skill release~\citep{ni2026trace2skilldistilltrajectorylocallessons}.\footnote{\url{https://github.com/Qwen-Applications/Trace2Skill}}
We follow the partition used by SkillOpt~\citep{yang2026skilloptexecutivestrategyselfevolving},
which contains 80 training, 40 validation, and 280 test tasks,
and use the held-out test partition for target evaluation.

\subsection{Models, Inference, and Refinement Budgets}
\label{app:implementation}

\paragraph{Model roles.}
The target solvers are Qwen3-8B, Qwen3-32B,
Llama-3.2-1B-Instruct, Claude Haiku~4.5, and GPT-5.5.
The target solver executes candidates and supplies the outcomes used
for acceptance and selection. GPT-5.5 supplies reflective feedback in
the reported optimization runs. The local inference configurations use a 32,768-token context and disable Qwen's thinking mode. Commercial endpoints are identified by
the model names used in the requests.

\paragraph{Decoding and tools.}
For the three text benchmarks, the skill body is placed in the system
message and the task input in the user message. YAML packaging
metadata is removed. Test-time output limits are 300 tokens for
SearchQA, 600 for SQuAD, and 7,000 for AIME. Spreadsheet methods share the same tools, preloaded-skill interface,
and 40-turn interaction budget. 

\paragraph{Proxy batch schedule.}
The standard text loops request 400 initial proxy items, shuffle them
with seed~0, reserve 150 for terminal selection, and use the remaining
250 for reflection. Each subsequent round requests 150 additional
reflection items. Each candidate is compared with the incumbent on
the same newly acquired acceptance batch of 120 items.
These are requested sizes: source exhaustion and filtering can reduce
the realized batch size. The spreadsheet loop instead requests 22
initial generated tasks, reserves 12 for selection, and requests eight
new reflection tasks and ten acceptance tasks per round.
Generation sessions may yield more tasks than requested, and the
implementation retains that full yield. 

\subsection{Baseline Construction}
\label{app:baselines}

\paragraph{Direct.}
The text baseline supplies a minimal task instruction and the required
answer-tag interface. SearchQA and SQuAD use
``Answer the question. Reply with only the answer, inside
\texttt{<answer></answer>} tags.''
The reported AIME Direct condition uses
``Give the final answer to the problem inside
\texttt{<answer></answer>} tags.''
For spreadsheets, the additional skill body is empty and the common
execution harness remains in place. Thus, Direct retains the
application's input, output, and tool instructions.

\paragraph{Off-the-shelf.}
We select externally published skills for domain relevance and
harness compatibility before evaluating them.
SearchQA and SQuAD share the \texttt{general-reasoning} skill,
and AIME uses \texttt{math-logic-reasoning}, both obtained
from the same public skill repository.\footnote{Revision
\texttt{c7fc50a45bd3}:
\url{https://github.com/ahoynodnarb/reasoning-based-skills/tree/c7fc50a45bd3b8368ecd650eabb58ab37440e6de}}
SpreadsheetBench uses the publicly released
\texttt{xlsx-manipulation} skill.\footnote{Revision
\texttt{9c4c7d5cd281}:
\url{https://github.com/claude-office-skills/skills/tree/9c4c7d5cd2813a8936bf2c9fdb174ea883b85a11}}
These fixed skills are shared across target models.
Referenced Markdown guidance is included in full because the
text solver cannot open additional files, and the source
instruction bodies are preserved.
The complete evaluated skill texts are reproduced in
Appendix~\ref{app:prompts}. The math skill includes general references to AIME/AMC,
which are retained. The source skills' development-data
provenance and human authorship have not been independently
established.

\paragraph{LLM-generated.}
Qwen3-32B authors one skill per domain from a frozen domain brief,
without demonstrations, benchmark names, execution feedback,
off-the-shelf skill text, or iterative selection.
The three domains are factual question answering and reading
comprehension, mathematical reasoning, and spreadsheet manipulation.
Each authoring request asks for one self-contained \texttt{SKILL.md}
of approximately 600--1,000 words and uses temperature~0, seed~0,
and a 6,000-token output limit. The same generated artifacts are
reused across target models. Their authoring prompts and the framework's task prompts are provided
in Appendix~\ref{app:prompts}.

\subsection{Aggregation of Results}
\label{app:aggregation}

Let $p_{m,b}^{A}$ denote the reported mean score for method $A$,
target model $m$, and benchmark $b$. For model $m$, the main table
reports the arithmetic mean of relative changes over its available
benchmarks $\mathcal{B}_m$:
\begin{equation}
 \operatorname{Avg}\Delta_m(A)
 = \frac{100}{|\mathcal{B}_m|}
   \sum_{b\in\mathcal{B}_m}
   \frac{p_{m,b}^{A}-p_{m,b}^{\mathrm{Direct}}}
        {p_{m,b}^{\mathrm{Direct}}}.
 \label{eq:appendix_model_relative_gain}
\end{equation}
There are four benchmarks for each model except Llama-3.2-1B,
whose average covers the three reported text benchmarks.
For the aggregate improvement over comparator $A$ in the conclusion,
we give equal weight to the 19 available model--benchmark pairs
$\mathcal{C}$:
\begin{equation}
 \operatorname{Gain}(A)
 = \frac{100}{|\mathcal{C}|}
   \sum_{(m,b)\in\mathcal{C}}
   \frac{p_{m,b}^{\mathrm{P2S}}-p_{m,b}^{A}}
        {p_{m,b}^{A}}.
 \label{eq:appendix_overall_relative_gain}
\end{equation}
Using the displayed table means gives 36.1\% over Direct and
83.0\% over Off-the-shelf. These are averages of relative changes,
not percentage-point accuracy gains or relative changes in a pooled
accuracy. Equal weighting also means that a pair with a small
baseline score can contribute a large relative change.

\section{Task Descriptions and Prompt Templates}
\label{app:prompts}
This section records the task descriptions and prompt
text in the accompanying implementation. In the listings, \texttt{<<...>>} denotes content inserted at runtime,
not text sent literally to the model. For example,
\texttt{<<skill>>} is the current skill and
\texttt{<<score:.0\%>>} is its score formatted as a whole-number percentage. The notation in the method section groups the following prompts:
\begin{center}
\small
\begin{tabular}{@{}p{0.22\linewidth}p{0.73\linewidth}@{}}
\textbf{Prompt symbol} & \textbf{Implementation templates} \\
$\theta_{\mathrm{setup}}$ & Automatic text task setup; spreadsheet specification and consistency check. \\
$\theta_{\mathrm{retrieve}}$ & Dataset task-category selection and hypothetical dataset description; the tool-using agent also receives the retrieval instructions in its data-agent prompt. \\
$\theta_{\mathrm{adapt}}$ & Source relevance and column selection, followed by the code-based row conversion. \\
$\theta_{\mathrm{validate}}$ & Converted-item compatibility report; structural checks and rollout-based screens are separate code operations. \\
$\theta_{\mathrm{gen}}$ & Text-item generation, or spreadsheet data-agent instructions plus the generation session request. \\
$\theta_{\mathrm{edit}}$ & Reflective editing for the corresponding pipeline, implemented directly or through diagnosis and target-model rewriting. \\
$\theta_{\mathrm{diagnose}}$, $\theta_{\mathrm{rephrase}}$ & Diagnosis and target-model rewriting in the generic text or SearchQA pipeline. \\
\end{tabular}
\end{center}

\subsection{Task Descriptions}
\label{app:task_descriptions}
The main experiments supply the following descriptions without
input--output demonstrations. The optional-example study appends examples
during task setup, as specified below.

\begin{PTwoSPrompt}{SearchQA: task description}
\begin{Verbatim}[breaklines,breakanywhere,fontsize=\scriptsize]
I want a model for answering Jeopardy! questions.
\end{Verbatim}
\end{PTwoSPrompt}

\begin{PTwoSPrompt}{SQuAD: task description}
\begin{Verbatim}[breaklines,breakanywhere,fontsize=\scriptsize]
I want a model that reads a passage and answers questions about it.
\end{Verbatim}
\end{PTwoSPrompt}

\begin{PTwoSPrompt}{AIME: task description}
\begin{Verbatim}[breaklines,breakanywhere,fontsize=\scriptsize]
I want a model that answers math olympiad questions.
\end{Verbatim}
\end{PTwoSPrompt}

\begin{PTwoSPrompt}{SpreadsheetBench: task description}
\begin{Verbatim}[breaklines,breakanywhere,fontsize=\scriptsize]
Develop a model that performs edits on Excel spreadsheets according to natural-language instructions. The input is an Excel file plus a user request (like a forum post) describing the desired change — filtering rows, matching data across sheets, computing aggregates, sorting, conditional edits, reformatting; the model edits the workbook and saves the result, which is checked against the correctly edited file.
\end{Verbatim}
\end{PTwoSPrompt}

\subsection{Automatic Text Pipeline}
\label{app:text_prompts}
\paragraph{Task setup.}
The setup request jointly produces the task specification and seed skill.
The selected matching mode is implemented by code rather than an LLM judge.

\begin{PTwoSPrompt}{Task setup}
\begin{Verbatim}[breaklines,breakanywhere,fontsize=\scriptsize]
A user wants a model built. Their request:
"<<prompt>>"

Derive the task setup. Reply with JSON only:
{"task_description": "<one paragraph: what the model receives as input and
must produce as output>",
 "input_desc": "<one sentence: what one input item looks like>",
 "match_mode": "span_exact" | "numeric" | "containment",
 "request_language": "<ISO 639-1 code of the language the request implies
for the data — the language the request is written in unless it names
another>",
 "expected_answer_style": "short-text" | "number" | "option-letter" |
"long-text" | "code" | "label",
 "task_type": "factual-recall" | "reading-comprehension" |
"math-problem-solving" | "coding" | "translation" | "classification" |
"summarization" | "other",
 "seed_skill": "<a minimal 1-2 sentence system prompt for the task, ending
by instructing: put the final answer inside <answer></answer> tags>"}

match_mode guide: numeric = answers are ONLY ever numbers — pure math
problem-solving tasks (numeric mode also grants the long reasoning budget
such tasks need). If answers MIX numbers with words or text spans (e.g.
passage QA where some questions need counting or dates), use span_exact —
its normalization matches numbers too, and the task_type is then
reading-comprehension, not math-problem-solving. span_exact = short text
spans matched exactly (after case/article/punctuation normalization);
containment = free-form short answers judged by whether the reference
appears in the reply.
\end{Verbatim}
\end{PTwoSPrompt}

\paragraph{Retrieval and adaptation.}
The automatic pipeline requests Hugging Face task tags, generates a
hypothetical dataset card with search phrases, and also searches using the
first 80 characters of the original task description. These strings are
passed to retrieval tools; search itself is an external operation.
For each candidate source, the column-selection prompt examines retrieved
sample rows. Its selected columns are checked against the actual schema,
and accepted rows are converted by the deterministic row mapper.

\begin{PTwoSPrompt}{Dataset task-category selection}
\begin{Verbatim}[breaklines,breakanywhere,fontsize=\scriptsize]
Task: <<task>>

Which HuggingFace dataset task-category tags fit this task? Choose 1-2 from exactly this list: question-answering, text-classification, summarization, translation, text-generation, table-question-answering, visual-question-answering, document-question-answering, token-classification, sentence-similarity, multiple-choice, text2text-generation. Reply JSON only: {"tags": ["..."]}
\end{Verbatim}
\end{PTwoSPrompt}

\begin{PTwoSPrompt}{Hypothetical dataset description and search phrases}
\begin{Verbatim}[breaklines,breakanywhere,fontsize=\scriptsize]
Write a short hypothetical HuggingFace dataset card (title + 3-sentence description + typical column names) for a dataset that would be PERFECT for this task:
<<task_description>>
Then on the final line write SEARCHES: followed by 4 comma-separated short search phrases for finding such datasets.
\end{Verbatim}
\end{PTwoSPrompt}

\begin{PTwoSPrompt}{Source relevance and column selection}
\begin{Verbatim}[breaklines,breakanywhere,fontsize=\scriptsize]
Analyze whether this dataset fits the task, and if so which columns are the INPUT and which single column is the OUTPUT. It is completely acceptable — and important — to answer that the dataset is irrelevant if no columns fit the task.

TASK: <<task_description>>

DATASET <<dataset_id>> sample rows:
<<peek[:3000]>>

Reply with JSON only: {"verdict": "relevant"|"irrelevant"|"ambiguous", "input_columns": [..], "output_column": "..", "reason": ".."}
\end{Verbatim}
\end{PTwoSPrompt}

\paragraph{Task compatibility.}
The next prompt reports observed properties of converted samples. Code
compares these properties with the previously established specification.
The displayed sample contains up to three inputs, each truncated to 500
characters, and the first reference answer truncated to 100 characters.
A separate answerability screen executes the seed-equipped target model
on a small source sample; it does not use a second LLM judging prompt.

\begin{PTwoSPrompt}{Converted-item compatibility report}
\begin{Verbatim}[breaklines,breakanywhere,fontsize=\scriptsize]
Candidate training items from dataset <<did>>:
<<shown>>

Report facts about these items. Reply with JSON only:
{"item_language": "<ISO 639-1 code of the language the candidate items are written in>",
 "item_answer_style": "<what these ITEMS' expected answers are — one of: short-text, number, option-letter, long-text, code, label. option-letter means the answer is a choice key like A/B/C, even if answer options are listed in the input>",
 "item_task_type": "<what skill these items exercise — one of: factual-recall, reading-comprehension, math-problem-solving, coding, translation, classification, summarization, other>",
 "input_complete": <true if each item's INPUT contains every component this task input requires: <<self.state['spec']['input_desc']!r>> — false if any required component (e.g. a passage, a document, an image reference) is absent from the items>,
 "reason": "..."}
\end{Verbatim}
\end{PTwoSPrompt}

\paragraph{Generation.}
When synthesis is invoked, the prompt includes up to six existing items
as format references, together with the latest diagnosis when available.
The input of each displayed example is truncated to 800 characters.
The generation prompt does not contain the seed skill. After generation,
seed and direct-prompt rollouts are used to screen candidate items.

\begin{PTwoSPrompt}{Text-item generation}
\begin{Verbatim}[breaklines,breakanywhere,fontsize=\scriptsize]
You create NEW practice items for training a model on this task.
Task: <<prompt>>
<<task>>

Real examples of this task's items (format reference):
<<examples>>

The model in training shows these weaknesses (analyst diagnosis):
<<diagnosis>>

Write <<n>> NEW items of the same kind and format that specifically exercise
these weaknesses and are moderately harder than the examples. Rules:
- Each item must be fully self-contained and objectively answerable from
  its own content. Verify each answer yourself before emitting it; only
  emit items whose answer you are certain of.
- Do not copy the examples; vary topics and structure.
- Keep the input format identical to the examples.
Reply with JSON only:
[{"input": "<item text>", "answers": ["<answer>"]}, ...]
\end{Verbatim}
\end{PTwoSPrompt}

If no examples are available, the example block is replaced by the
following literal text.

\begin{PTwoSPrompt}{Generation: empty-example fallback}
\begin{Verbatim}[breaklines,breakanywhere,fontsize=\scriptsize]
(no examples available - invent representative items that faithfully match the task description, varied in topic, phrasing, and difficulty)
\end{Verbatim}
\end{PTwoSPrompt}

When no diagnosis is available, its placeholder is replaced with
\texttt{(no diagnosis yet - target generally challenging variations of the examples)}.
The direct screening instruction is \texttt{Give the final answer inside
<answer></answer> tags.}

\paragraph{Reflective editing.}
The automatic text loop supports direct editing and diagnosis followed by
rewriting. The reported SQuAD and AIME runs use direct editing.
The failure block contains at most eight failures; each shows a 200-character
input excerpt, its first reference answer, and a 60-character excerpt of the
model output. The score in these prompts is measured on the reflection set,
which is distinct from the terminal selection set, despite the historical
wording ``held-out'' in the templates.

\begin{PTwoSPrompt}{Automatic text loop: direct reflective editing}
\begin{Verbatim}[breaklines,breakanywhere,fontsize=\scriptsize]
You are refining the SYSTEM PROMPT (skill) of a model.
Task: <<prompt>>
<<input_desc>>

CURRENT SKILL:
---
<<skill>>
---
Score of the current skill on held-out real examples: <<score:.0%>>.

Failures of the CURRENT skill (input excerpt, correct answer, model's wrong
answer):
<<failures>>

Write an IMPROVED complete skill. STRICT RULES for what a skill may contain:
- Allowed: PROCESS instructions — how to read and decompose the input, how
  to derive candidate answers, how to verify them against every constraint
  in the input, how to choose the expected answer GRANULARITY and exact
  wording/format, when to re-check or restart, how to format output.
  Reasoning before answering is allowed if the final answer is inside
  <answer></answer> tags.
- FORBIDDEN: any rule naming specific topics, entities, domains, or
  categories; any rule of the form "when the input is about X, do Y"; rules
  must plausibly help THOUSANDS of unseen inputs, not fix the examples
  above.
Keep it under 40 lines. Output ONLY the new skill text.
\end{Verbatim}
\end{PTwoSPrompt}

\subsection{LLM-Generated Baseline}
\label{app:baseline_authoring_prompts}
The LLM-generated baseline is authored once per domain from the following
system instruction and domain brief. The same factual-QA skill is used for
SearchQA and SQuAD. These authoring requests contain no proxy rollouts or
target-model feedback. The generated skills are held fixed across target
solvers; the authoring model is Qwen3-32B.

\begin{PTwoSPrompt}{Baseline authoring: system message}
\begin{Verbatim}[breaklines,breakanywhere,fontsize=\scriptsize]
Write a reusable domain skill for a language-model assistant from the domain brief alone.
Return only a complete SKILL.md file with YAML frontmatter containing name and description, followed by practical instructions.
The skill must be self-contained in this one file. Do not require other skills, reference files, browsing, external services, or tools beyond those specified in the brief.
Give broadly useful domain procedures, checks, and pitfalls. Do not mention datasets, benchmarks, evaluation metrics, model identities, or particular test conventions. Do not include example questions, worked examples, or example answers. Do not prescribe an answer schema; the calling application supplies its output-format requirements.
Aim for 600 to 1000 words. Produce one finished skill; do not describe alternatives or a revision process.
\end{Verbatim}
\end{PTwoSPrompt}

\begin{PTwoSPrompt}{Baseline authoring: factual QA and reading comprehension}
\begin{Verbatim}[breaklines,breakanywhere,fontsize=\scriptsize]
Domain: factual question answering and reading comprehension.
The assistant answers questions across general knowledge domains. A question may stand alone or be accompanied by a passage. It should use the supplied passage when present and its existing knowledge when no passage is supplied. The assistant has no browsing, retrieval, code execution, or file-reading tools. Write a reusable skill for accurate, precise answers under these conditions.
\end{Verbatim}
\end{PTwoSPrompt}

\begin{PTwoSPrompt}{Baseline authoring: mathematics}
\begin{Verbatim}[breaklines,breakanywhere,fontsize=\scriptsize]
Domain: mathematical problem solving.
The assistant solves mathematical questions involving arithmetic, algebra, geometry, number theory, combinatorics, and probability, including difficult competition-style problems. It has no browsing, calculator, code execution, or file-reading tools. Write a reusable skill for choosing strategies, carrying out reasoning and calculations, handling constraints and cases, and verifying conclusions.
\end{Verbatim}
\end{PTwoSPrompt}

\begin{PTwoSPrompt}{Baseline authoring: spreadsheets}
\begin{Verbatim}[breaklines,breakanywhere,fontsize=\scriptsize]
Domain: spreadsheet creation, inspection, and editing.
The assistant works on local Excel workbooks and tabular files in response to user instructions. It can use a shell and Python with openpyxl and pandas. Write a reusable skill for inspecting workbook structure, reading and transforming data, working with formulas and formatting, preserving unrelated workbook contents, saving the requested output, and verifying the result. It has no browsing or external spreadsheet service.
\end{Verbatim}
\end{PTwoSPrompt}

For both off-the-shelf and LLM-generated text skills, the following
application-level instruction is prepended and appended to the skill body.
The spreadsheet baselines use the common spreadsheet execution harness
without this text-answer adapter.

\begin{PTwoSPrompt}{Shared output-format adapter for fixed text skills}
\begin{Verbatim}[breaklines,breakanywhere,fontsize=\scriptsize]
Application output format: put the final answer only inside <answer></answer> tags at the end of the response. Keep any explanation outside those tags. Follow this output format if a skill below describes a different presentation format. Only the capabilities supplied by the application are available.
\end{Verbatim}
\end{PTwoSPrompt}

\subsection{Off-the-Shelf Skills}
\label{app:off_the_shelf_skills}
\begingroup
\setlength{\emergencystretch}{3em}
We reproduce the complete instruction bodies used by the off-the-shelf
baselines. SearchQA and SQuAD share \texttt{general-reasoning}, while
AIME uses \texttt{math-logic-reasoning}; both are from the
\texttt{reasoning-based-skills} repository at revision
\texttt{c7fc50a45bd3}.\footnote{\url{https://github.com/ahoynodnarb/reasoning-based-skills/tree/c7fc50a45bd3b8368ecd650eabb58ab37440e6de}}
SpreadsheetBench uses \texttt{xlsx-manipulation} from the
Claude Office Skills repository at revision
\texttt{9c4c7d5cd281}.\footnote{\url{https://github.com/claude-office-skills/skills/tree/9c4c7d5cd2813a8936bf2c9fdb174ea883b85a11}}
Each baseline skill is shared across target models. The listings include
bundled Markdown references and the answer-format adapters used in evaluation.
YAML packaging metadata is omitted, matching the skill loaders; line endings
and terminal blank lines are normalized for display. The instruction bodies
are otherwise preserved.
\par
\endgroup

\begin{PTwoSSkill}{PTwoSSkillBlue}{General reasoning: SearchQA and SQuAD}
\begin{Verbatim}[breaklines,breakanywhere,fontsize=\scriptsize]
Application output format: put the final answer only inside <answer></answer> tags at the end of the response. Keep any explanation outside those tags. Follow this output format if a skill below describes a different presentation format. Only the capabilities supplied by the application are available.



# General Reasoning Performance Skill

This skill helps the agent produce its highest-quality answers on reasoning-intensive tasks. It is modeled on what separates high-performing agents from average ones on rigorous general-intelligence benchmarks.

## Core Philosophy

These benchmarks don't test memorization — they test **reliable reasoning under uncertainty**. The key failure modes are:
- Confidently selecting a plausible-sounding wrong answer
- Skipping careful analysis because the question *looks* easy
- Anchoring on the first interpretation rather than the correct one
- Collapsing uncertainty prematurely

The antidote is a consistent, disciplined process applied to every non-trivial question.

---

## The Reasoning Protocol

### Step 1: Fully Parse the Question Before Answering

Before generating any answer content, make sure you've understood:
- **What is literally being asked?** (Not what you expect to be asked.)
- **What domain and subfield is this in?**
- **What level of precision is required?** (Ballpark vs. exact vs. formal proof)
- **Are there any tricks, negations, or double-negatives in the question?** ("Which of the following is NOT..." is different from "Which of the following is...")
- **What would a wrong answer look like?** Anticipate the distractors.

> **Red flag**: If you start generating an answer within the first second of reading a hard question, you haven't done this step.

### Step 2: Activate the Right Knowledge Frame

For each question, mentally identify:
- The primary concept being tested
- The standard result, definition, or framework relevant to it
- Any common misconceptions or edge cases in this area
- Whether this requires recall, derivation, or judgment

Don't conflate adjacent concepts. E.g., "entropy" in thermodynamics vs. information theory are related but distinct.

### Step 3: Reason Before Concluding

For hard questions, produce explicit intermediate reasoning. Do not jump to an answer:

- **For factual questions**: Recall what you know, note your confidence level, consider whether you might be confusing this with something adjacent.
- **For multi-step problems**: Write out the chain of reasoning. Don't compress steps. Each step should follow from the last.
- **For multiple choice**: Analyze each option independently before comparing. Don't pick the "best-sounding" option — eliminate options based on what's *wrong* about them.
- **For ambiguous questions**: Identify the most reasonable interpretation AND note if there's a less obvious interpretation that could change the answer.

### Step 4: Stress-Test Your Answer

Before committing, briefly check:
- Does this answer make sense dimensionally / logically / empirically?
- Is there a known result, theorem, or fact that would confirm or contradict it?
- Have I seen a question that looks like this but has a counterintuitive answer?
- If there are multiple choice options, does your answer actually match one of them exactly? (Don't approximate.)

### Step 5: Calibrate Your Confidence

State your confidence when it matters. If you're uncertain:
- Say so clearly.
- Name what would change your answer.
- Offer the top 2 candidates if genuinely unsure.

Do NOT fake certainty. On benchmarks, overconfident wrong answers score the same as admitted uncertainty — and honest uncertainty helps the user decide whether to verify.

---

## Domain-Specific Tactics

Read `/references/domain-tactics.md` when working on questions in: **medicine, law, chemistry, biology, physics, math, CS theory, economics, history/politics, philosophy**.

Each domain has specific failure modes and best practices that differ from generic reasoning.

---

## Multiple Choice Questions

Multiple choice deserves special treatment because distractors are engineered to exploit reasoning shortcuts.

**The two-pass method:**

**Pass 1 — Elimination**: Go through each option and ask "Is this definitely wrong, and why?" Mark options you can eliminate with confidence.

**Pass 2 — Selection**: Among remaining options, identify which is most precisely correct — not just most plausible.

**Common traps:**
- Options that are *true statements* but don't answer the question asked
- Options with one word that makes them false (read carefully)
- "All of the above" / "None of the above" — require you to have verified all others
- Magnitude traps (e.g., "increases" vs. "increases significantly")
- Temporal/causal confusion ("A causes B" vs. "A is correlated with B")

---

## Handling Uncertainty and Knowledge Gaps

When you don't know something with confidence:

1. **Reason from first principles** — what can be derived from what you do know?
2. **Use analogical reasoning carefully** — similar domains can be informative but also misleading
3. **Bound the answer** — even if you don't know the exact answer, you may be able to rule out extreme options
4. **Be honest** — "I'm not certain, but my best reasoning leads to X because Y" is more valuable than false confidence

Never hallucinate citations, specific statistics, or precise technical facts to fill gaps. Say "I don't have reliable recall of the exact figure, but..."

---

## Meta-Cognitive Checks

These are quick sanity checks to run on your own reasoning process:

| Check | What to ask |
|-------|-------------|
| **Anchoring** | Am I committed to my first instinct without re-examining? |
| **Framing** | Did I take the question at face value? Are there other readings? |
| **Completeness** | Did I actually address all parts of the question? |
| **Precision** | Is my answer specific enough, or vague in ways that matter? |
| **Distractor check** | For MCQ: did I pick the answer that *sounds* best or the one that *is* best? |

---

## Output Format Guidelines

**For open-ended reasoning questions:**
- Lead with your reasoning chain, then state the conclusion
- Don't bury the answer in the middle of dense prose
- If multiple steps, number them

**For multiple choice:**
- State which option you select and why
- Briefly explain why the strongest distractor is wrong (shows you checked)

**For scientific/technical questions:**
- Use standard notation and terminology for the domain
- Show units, constraints, and assumptions where relevant
- If deriving, show the key derivation steps — not just the result

**For questions with definitive correct answers:**
- Be direct. Don't hedge when you're confident.
- Don't pad with caveats that add no information

---

## Quick Reference Checklist

Before submitting any answer to a hard question, confirm:
- [ ] I fully parsed the question (including any negations/qualifications)
- [ ] I identified the relevant domain and conceptual framework
- [ ] I reasoned step-by-step rather than pattern-matching to an answer
- [ ] I stress-tested my answer against known results or logical consistency
- [ ] My confidence level is stated and calibrated
- [ ] My answer format matches what was asked

---

*See `/references/domain-tactics.md` for domain-specific guidance.*




# Bundled reference: references/domain-tactics.md

# Domain-Specific Reasoning Tactics

This file contains targeted guidance for domains where benchmark questions frequently appear. Each section covers: the core framework to activate, common traps, and what distinguishes a correct answer from a plausible-but-wrong one.

---

## Medicine & Biology

**Core framework**: Mechanistic biological thinking — trace the causal chain from stimulus to response, or pathology to symptom.

**Key tactics:**
- For clinical questions: Use the standard diagnostic framework (chief complaint → pathophysiology → presentation → treatment). Don't skip steps.
- Distinguish *mechanism* from *association*. A drug that "lowers blood pressure" may do so via multiple pathways — the question may be asking specifically which one.
- For pharmacology: Know MOA (mechanism of action), not just drug class. "Beta blocker" is not enough if the question is about selectivity or downstream effects.
- Genetics questions: Distinguish penetrance, expressivity, epistasis. Autosomal dominant ≠ 100% expression.

**Common traps:**
- Confusing sensitivity with specificity (and PPV/NPV with those)
- Mixing up first-line vs. gold standard vs. most common treatment
- Forgetting exceptions to classic presentations (e.g., atypical MI in women/diabetics)
- Relative risk vs. absolute risk vs. odds ratio — these are not interchangeable

---

## Chemistry

**Core framework**: Formal chemical reasoning — electron flow, thermodynamic vs. kinetic control, and conservation laws.

**Key tactics:**
- For organic mechanisms: Draw the electron flow, don't guess product from pattern memory alone
- Thermodynamics: ΔG = ΔH - TΔS. At high T, entropy dominates; at low T, enthalpy dominates
- Distinguish equilibrium (ΔG) from rate (activation energy, Ea). A favorable ΔG doesn't mean a fast reaction
- Electrochemistry: Know the sign conventions (reduction potential, cell notation direction)
- Spectroscopy: IR → functional groups; NMR → connectivity and H count; MS → molecular weight and fragmentation

**Common traps:**
- Le Chatelier's principle: Adding inert gas at constant volume doesn't shift equilibrium; at constant pressure it does (via partial pressure changes)
- Solubility rules applied incorrectly under non-standard conditions
- Acid/base in non-aqueous solvents behaves differently
- "More stable" ≠ "more reactive" (thermodynamic vs. kinetic product)

---

## Physics

**Core framework**: Identify the relevant physical law, apply it carefully with sign conventions and constraints.

**Key tactics:**
- Dimensional analysis first — if the units don't work out, your approach is wrong
- Draw a free body diagram or energy diagram before writing equations
- For E&M: Gauss's law, Faraday's law, and Ampere's law require careful choice of Gaussian surface / Amperian loop
- Quantum: Distinguish expectation values from eigenvalues. "The measured value" is an eigenvalue; "the average value" is an expectation value
- Relativity: Events that are simultaneous in one frame are not in another (unless at same location)

**Common traps:**
- Sign errors in work-energy theorem (work done BY system vs. ON system)
- Forgetting to account for rotational kinetic energy in rolling problems
- Confusing instantaneous and average quantities
- In optics: real vs. virtual images have different sign conventions in different textbooks — know which convention is in use

---

## Mathematics

**Core framework**: Formal deductive reasoning — don't rely on intuition for edge cases.

**Key tactics:**
- Read quantifiers carefully: "there exists" vs. "for all" changes everything
- For proofs: consider both directions for iff; consider edge cases (n=0, empty set, boundary conditions)
- Combinatorics: identify whether order matters (permutation) and whether repetition is allowed
- Probability: is this with or without replacement? Conditional or marginal?
- Linear algebra: distinguish rank, nullity, dimension of span, eigenspace dimension

**Common traps:**
- Confusing a function being differentiable vs. continuously differentiable
- "Continuous" does not imply "differentiable" (but differentiable does imply continuous)
- Series convergence: pointwise vs. uniform convergence behave differently
- Matrix multiplication is not commutative — order matters
- For number theory: "prime" vs. "irreducible" are equivalent only in unique factorization domains

---

## Computer Science Theory

**Core framework**: Formal models of computation — reductions, complexity classes, and asymptotic analysis.

**Key tactics:**
- Complexity: Know the standard hierarchy: P ⊆ NP ⊆ PSPACE ⊆ EXPTIME. Containments are known; equalities are open (except PSPACE ≠ EXPTIME)
- Reductions: A ≤p B means "A reduces to B" — if B is easy, so is A. If A is hard, so is B.
- Algorithms: Distinguish best/worst/average case. Big-O is worst-case by convention unless stated otherwise
- Data structures: Amortized complexity is not per-operation worst case
- Automata: DFA and NFA recognize the same class (regular languages); PDA recognizes CFLs; TM recognizes RE

**Common traps:**
- "NP-hard" does not mean "not in P" (if P=NP, NP-hard problems are in P — we just don't know)
- Greedy algorithms are correct for some problems and wrong for others — always verify with a counterexample or exchange argument
- Hash tables: O(1) *expected*, not guaranteed worst case
- Graph algorithms: Dijkstra fails with negative edges; Bellman-Ford handles them but not negative cycles

---

## Economics

**Core framework**: Marginal analysis, equilibrium, and incentive reasoning.

**Key tactics:**
- Microeconomics: Think at the margin. Optimal decisions equate marginal benefit to marginal cost
- Supply/demand: Distinguish shifts IN the curve (from changes in non-price factors) vs. movements ALONG the curve (price changes)
- Game theory: Dominant strategy → Nash equilibrium → Pareto optimality are different concepts
- Macroeconomics: Distinguish short-run vs. long-run (SRAS vs. LRAS shifts differently)
- Elasticity: Elastic demand (|ε| > 1): price increase → total revenue falls. Inelastic: price increase → revenue rises.

**Common traps:**
- Comparative advantage ≠ absolute advantage
- "Deadweight loss" arises from any deviation from competitive equilibrium, not just taxes
- Fiscal vs. monetary policy have different transmission mechanisms and lags
- Ricardian equivalence: government borrowing may not stimulate if consumers anticipate future taxes

---

## Philosophy & Logic

**Core framework**: Formal argument structure and validity vs. soundness.

**Key tactics:**
- An argument is **valid** if the conclusion follows from the premises; **sound** if also the premises are true
- Distinguish deductive vs. inductive vs. abductive arguments
- For ethics questions: identify which framework (consequentialism, deontology, virtue ethics, contractualism) is in play
- Identify the implicit premises in an argument — these are often where the question is testing
- Modal logic: distinguish necessity (□) from possibility (◇); know the S4/S5 axioms if relevant

**Common traps:**
- "Valid" in formal logic ≠ "correct" in everyday speech
- Ad hominem, straw man, begging the question — know the formal definitions, not just the colloquial use
- The is/ought gap (Hume): descriptive premises don't entail normative conclusions without a bridge principle
- Trolley problems: the point is usually the principle that distinguishes the cases, not the answer

---

## History & Political Science

**Core framework**: Causal and contextual reasoning — distinguish proximate from underlying causes.

**Key tactics:**
- For causal questions: what was necessary? What was sufficient? The answer may be neither
- Distinguish historiographical debates from established consensus — benchmark questions tend to test consensus
- Political science: distinguish empirical claims (what is) from normative claims (what ought to be)
- For constitutional/legal questions: know the formal doctrine, not just the intuitive outcome

**Common traps:**
- Teleological reasoning (treating outcomes as inevitable) — history wasn't determined in advance
- Confusing correlation in historical data with causation
- Presentism: applying contemporary values/concepts to historical actors anachronistically
- Conflating the views of individual historical figures with the movements they represented

---

## General Quantitative Reasoning

For any question involving numbers, estimates, or calculations:

1. **Sanity check with orders of magnitude** first
2. **Unit analysis** — if you can't get units to work out, your formula is wrong
3. For estimates: identify what you know, what you need, and chain them together (Fermi estimation)
4. Avoid false precision — if inputs are ±10%, the output can't be more precise than ±10%
5. Check: does the answer change sign or direction at any extreme? Is that physically/logically reasonable?


Application output format: put the final answer only inside <answer></answer> tags at the end of the response. Keep any explanation outside those tags. Follow this output format if a skill below describes a different presentation format. Only the capabilities supplied by the application are available.
\end{Verbatim}
\end{PTwoSSkill}

\begin{PTwoSSkill}{PTwoSSkillBlue}{Mathematical and logical reasoning: AIME}
\begin{Verbatim}[breaklines,breakanywhere,fontsize=\scriptsize]
Application output format: put the final answer only inside <answer></answer> tags at the end of the response. Keep any explanation outside those tags. Follow this output format if a skill below describes a different presentation format. Only the capabilities supplied by the application are available.



# Math and Logic Reasoning Skill

This skill governs how the agent approaches problems that demand **exact, verifiable answers** through structured reasoning. It applies to two broad problem families — **mathematical reasoning** and **logical deduction** — with shared principles and specialized techniques for each.

---

## Core Principles (Apply to Everything)

### 1. Never Skip Steps in Your Head
The most common source of error is "obvious" steps that aren't verified. Write out every nontrivial manipulation. A step that takes 2 seconds to write saves minutes of backtracking.

### 2. State What You're Solving Before You Solve It
Before any computation, articulate:
- What is the unknown? What form should the answer take?
- What are the constraints or given conditions?
- What counts as a valid solution?

This prevents solving the wrong problem.

### 3. Maintain a Clean Separation Between Work and Conclusion
Your scratch work can be messy. Your final answer should be clearly stated, with the key logical chain summarized. Never bury the answer in the middle of working.

### 4. Verify Before Finalizing
Every answer should pass at least one check:
- For math: substitute back, check edge cases, sanity-check the magnitude
- For logic: verify your assignment satisfies *all* constraints, not just the ones you used

### 5. If Stuck, Change Representation
Stuckness usually means the current representation is wrong. Try: different variable naming, a diagram, a small concrete case, the contrapositive, casework, symmetry arguments, or modular arithmetic.

---

## Part I: Mathematical Reasoning

### Problem Intake

Before computing anything, ask:
1. **What type is this?** (algebra, number theory, combinatorics, geometry, calculus, probability)
2. **What is the answer format?** (integer, fraction, expression, "find all X such that...")
3. **Are there hidden constraints?** (integer solutions only? positive reals? non-degenerate triangles?)

Read the problem twice. Misreading is the #1 cause of wasted effort.

### Execution Framework

**Phase 1 — Setup**
- Define variables clearly. Use conventional notation for the domain.
- Write out all given conditions as equations/inequalities.
- Note what you need to find.

**Phase 2 — Strategy Selection**
Choose your approach before executing. Common strategies:

| Situation | Strategy |
|-----------|----------|
| System of equations | Substitution, elimination, or matrix methods |
| Divisibility / modular | Factor, use mod arithmetic, check small cases |
| Counting | Bijection, complementary counting, inclusion-exclusion |
| Geometric | Coordinate geometry, similarity, area ratios, trigonometry |
| Optimization | AM-GM, Cauchy-Schwarz, calculus (if allowed), Lagrange multipliers |
| Recursion / sequences | Find closed form, characteristic equation, generating functions |
| Existence / construction | Explicit construction OR pigeonhole / probabilistic |

**Phase 3 — Execution**
- Work symbolically as long as possible before substituting numbers.
- At each step, ask: *does this simplify? does a factor cancel?*
- Flag any step where you divide (check divisor ≠ 0) or square (check sign).

**Phase 4 — Verification**
- Substitute your answer back into the original equations.
- Check boundary/edge cases.
- Sanity check: is the magnitude reasonable? Does it have the right sign/parity?
- For competition problems with integer answers: double-check arithmetic on the final computation.

### Number Theory Heuristics
- Try small cases first — patterns often emerge for n=1,2,3,4.
- Mod 2, mod 3, mod 9, mod 10 are your first tools for divisibility.
- For Diophantine equations: parametrize solutions, check if infinite families exist.
- GCD/LCM: remember gcd(a,b)·lcm(a,b) = a·b for positive integers.

### Combinatorics Heuristics
- Always ask: are objects distinguishable? Is order relevant?
- Complementary counting is often cleaner than direct counting.
- If a formula seems off by a factor of 2 or k!, you probably over/under-counted permutations.
- Recursion: define f(n) clearly, find base cases, verify your recurrence on small n.

### Algebra / Polynomials
- Factor before expanding whenever possible.
- Vieta's formulas: if you know roots, you know symmetric functions of roots.
- For inequalities: identify equality conditions first — they often suggest substitutions.

### Geometry
- Draw a labeled diagram. Geometry done without a diagram is almost always wrong.
- Mark what's given, what's equal, what's perpendicular.
- Coordinates work well for "find length/area" problems; synthetic is cleaner for "prove."
- Similar triangles: identify them by angle-angle, then set up ratios carefully.

---

## Part II: Logical Deduction

This covers constraint satisfaction, logic grid puzzles, syllogistic chains, and any problem where you must derive what *must* be true given a set of rules.

### The Golden Rule of Logic Puzzles
**Never guess. Never assume. Only record what is forced by the constraints.**

If you can't prove something is true, it isn't known yet.

### Constraint Setup

1. **Inventory**: List all entities, attributes, and possible values.
   - e.g., People: {Alice, Bob, Carol}; Pets: {cat, dog, fish}; Houses: {red, blue, green}

2. **Build an elimination grid** for each (attribute × entity) pair. Start with all values possible.

3. **Translate each clue** into one or more constraint statements:
   - "A is to the left of B" → pos(A) < pos(B)
   - "A and B are neighbors" → |pos(A) - pos(B)| = 1
   - "A has the cat" → pet(A) = cat, which implies pet(B) ≠ cat, pet(C) ≠ cat
   - "The cat owner drinks milk" → pet(X) = cat ↔ drink(X) = milk

### Deduction Loop

Repeat until no new inferences can be drawn:

1. **Direct assignment**: If only one value remains possible for a cell, assign it and propagate (eliminate that value from all other cells in the same row/column).

2. **Forced by elimination**: If a value can only go in one cell in a group, assign it there.

3. **Linked constraints**: Combine two clues to produce a new one.
   - "A is left of B" + "B is left of C" → A is left of C

4. **Positional arithmetic**: For ordered arrangements, use inequalities and count remaining slots.

5. **Case analysis** (last resort): If stuck, branch on the most constrained unknown. Mark it clearly, follow to contradiction or resolution, then try the other branch.

### Common Errors to Avoid

| Error | Prevention |
|-------|-----------|
| Assuming a clue implies its converse | Only conclude what the clue directly states |
| Forgetting a constraint exists | After each deduction, re-check *all* clues for new implications |
| Conflating "neighbor" with "immediately left/right" | Re-read the problem's definition |
| Assigning before confirming uniqueness | Count remaining options before assigning |

### Verification
Once a complete assignment is found:
- Go through **every single clue** and verify the assignment satisfies it.
- Note which clue each verification covers — if any clue is never checked, you might have missed it.

---

## Part III: Presenting Solutions

### For Math Problems
```
[Brief restatement of what's being solved]

**Setup:**
[Define variables, state given conditions]

**Solution:**
[Step-by-step work with clear transitions]

**Answer: [X]**

**Verification:**
[Substitute back or check the answer]
```

### For Logic Puzzles
```
[List of entities and attributes]

**Deduction trace:**
[Numbered steps, each citing the clue used]
  Step 1: From clue 3, ...
  Step 2: Since step 1 forces ..., combined with clue 7, ...

**Solution:**
[Final assignment table]

**Verification:**
[Every clue checked: ✓ Clue 1: ..., ✓ Clue 2: ...]
```

---

## When to Use the Reference Files

- See `references/competition-techniques.md` for deeper treatment of specific math techniques (generating functions, modular inverses, projective geometry, etc.)
- See `references/logic-patterns.md` for a catalog of common constraint types and their propagation rules

---

## Meta-Note on Difficulty

Hard problems (AIME #13-15, multi-constraint logic grids) often require:
- **Insight**, not just procedure: a clever substitution, a symmetry argument, or a reframe of the problem
- **Patience**: the solution may require 5+ deduction steps before progress becomes visible
- **Willingness to restart**: if you've been solving for a while and the answer looks messy, consider whether you chose the right approach at the outset

Elegance is a signal of correctness. If the answer is ugly, look for a cleaner path.




# Bundled reference: references/competition-techniques.md

# Competition Math Techniques Reference

## Table of Contents
1. Modular Arithmetic & Number Theory
2. Generating Functions
3. Combinatorial Identities
4. Inequalities
5. Geometric Tools
6. Algebraic Manipulation
7. Probabilistic / Expectation

---

## 1. Modular Arithmetic & Number Theory

### Modular Inverses
- a⁻¹ mod p exists iff gcd(a, p) = 1
- Compute via Fermat's little theorem: a⁻¹ ≡ a^(p-2) mod p (p prime)
- Or extended Euclidean algorithm for non-prime moduli

### Chinese Remainder Theorem
- If n = p₁p₂...pₖ with pᵢ pairwise coprime, then ℤ/nℤ ≅ ℤ/p₁ℤ × ... × ℤ/pₖℤ
- Use to reconstruct solutions from residues modulo each prime factor

### Lifting the Exponent (LTE)
For odd prime p, p | a-b but p ∤ a, p ∤ b:
- νₚ(aⁿ - bⁿ) = νₚ(a-b) + νₚ(n)
For p = 2, a, b odd:
- ν₂(aⁿ - bⁿ) = ν₂(a-b) + ν₂(a+b) + ν₂(n) - 1

### Zsygmondy / Bang's Theorem
aⁿ - bⁿ has a prime factor not dividing aᵏ - bᵏ for all k < n (with small exceptions). Useful for existence proofs.

### Legendre's Formula
The largest power of prime p dividing n! is: Σ⌊n/pᵏ⌋ for k = 1, 2, ...

### Orders and Primitive Roots
- ord_n(a) = smallest positive k with aᵏ ≡ 1 mod n
- For prime p, primitive roots exist; there are φ(p-1) of them
- A number a is a quadratic residue mod p iff a^((p-1)/2) ≡ 1 mod p (Euler's criterion)

---

## 2. Generating Functions

### Basic Setup
To find the number of ways to choose items with given constraints, encode options as polynomials and multiply.

**Example**: Coins with values 1, 5, 10, 25. Number of ways to make n cents = coefficient of xⁿ in:
(1 + x + x² + ...) · (1 + x⁵ + x¹⁰ + ...) · ...

### Exponential Generating Functions (EGF)
For labeled structures, use EGF: f(x) = Σ aₙ xⁿ/n!
- EGF of derangements: e^(-x) / (1-x)
- Labeled structures often split as products of EGFs

### Recurrence to Closed Form
Given aₙ = c₁aₙ₋₁ + c₂aₙ₋₂:
1. Characteristic equation: r² = c₁r + c₂
2. Roots r₁, r₂ → general solution aₙ = Ar₁ⁿ + Br₂ⁿ
3. Use initial conditions to find A, B

---

## 3. Combinatorial Identities

### Vandermonde's Identity
Σₖ C(m,k)C(n,r-k) = C(m+n, r)

### Hockey Stick
Σₖ₌₀ⁿ C(k+r, r) = C(n+r+1, r+1)

### Stars and Bars
Number of non-negative integer solutions to x₁+...+xₖ = n is C(n+k-1, k-1).
With each xᵢ ≥ 1: C(n-1, k-1).

### Inclusion-Exclusion
|A₁ ∪ ... ∪ Aₙ| = Σ|Aᵢ| - Σ|Aᵢ∩Aⱼ| + Σ|Aᵢ∩Aⱼ∩Aₖ| - ...

### Burnside's Lemma (for counting under symmetry)
Number of distinct objects = (1/|G|) Σ_{g∈G} |Fix(g)|
where Fix(g) = set of objects fixed by symmetry g.

---

## 4. Inequalities

### AM-GM
(a₁+...+aₙ)/n ≥ (a₁...aₙ)^(1/n), equality iff all aᵢ equal.
**Trick**: To find when equality holds, set all terms equal. This guides substitutions.

### Cauchy-Schwarz
(Σaᵢbᵢ)² ≤ (Σaᵢ²)(Σbᵢ²)
In Engel/Titu form: Σ aᵢ²/bᵢ ≥ (Σaᵢ)²/Σbᵢ

### Power Mean Inequality
M_r ≥ M_s for r > s, where M_r = ((Σxᵢʳ)/n)^(1/r)

### Rearrangement Inequality
If a₁≤...≤aₙ and b₁≤...≤bₙ, then Σaᵢb_{σ(i)} is maximized when σ is the identity and minimized when σ reverses the order.

### Jensen's Inequality
For convex f: f((Σwᵢxᵢ)/Σwᵢ) ≤ (Σwᵢf(xᵢ))/Σwᵢ
Flip for concave f.

---

## 5. Geometric Tools

### Coordinate Geometry Checklist
- Set up coordinates to exploit symmetry (place midpoint at origin, axis along line of symmetry)
- Shoelace formula for polygon area: A = ½|Σ(xᵢyᵢ₊₁ - xᵢ₊₁yᵢ)|
- Distance from point (x₀,y₀) to line ax+by+c=0: |ax₀+by₀+c|/√(a²+b²)

### Circle Theorems
- Power of a point: PA·PB = PC·PD (for two chords through P)
- Extended law of sines: a/sin A = 2R
- Ptolemy's theorem: AC·BD = AB·CD + AD·BC (cyclic quadrilateral)

### Area Methods
- [ABC] = ½ ab sin C = ½ · base · height = √(s(s-a)(s-b)(s-c)) (Heron's)
- Ratio of areas = ratio of bases (same height) or ratio of products of sides (shared angle)

### Trigonometric Identities for Competition
- sin(A+B) = sinA cosB + cosA sinB
- cos(2A) = 1 - 2sin²A = 2cos²A - 1
- Product to sum: 2sinA cosB = sin(A+B) + sin(A-B)

### Vectors
- Centroid G = (A+B+C)/3
- Midpoint M = (A+B)/2
- Line through points A, B: P = A + t(B-A)
- Cevian concurrence: Ceva's theorem (trigonometric form often cleanest)

---

## 6. Algebraic Manipulation

### Factoring Patterns
- a²-b² = (a-b)(a+b)
- a³±b³ = (a±b)(a²∓ab+b²)
- aⁿ-bⁿ = (a-b)(aⁿ⁻¹+aⁿ⁻²b+...+bⁿ⁻¹)
- Sophie Germain: a⁴+4b⁴ = (a²+2b²+2ab)(a²+2b²-2ab)

### Substitution Strategies
- Symmetric expressions in a,b: try s=a+b, p=ab
- Trigonometric substitution for √(1-x²): x = sin θ
- Hyperbolic for √(x²-1): x = cosh t
- For constraints like ab+bc+ca=1: try a=tan A, b=tan B, c=tan C with A+B+C=π/2

### Polynomial Root Tricks
- Vieta: for xⁿ+pₙ₋₁xⁿ⁻¹+...+p₀, sum of roots = -pₙ₋₁, product of roots = (-1)ⁿp₀
- If r is a root of p(x), then (x-r) | p(x)
- Rational root theorem: rational roots of integer polynomial are ±(factor of constant)/(factor of leading)

---

## 7. Probability and Expectation

### Key Formulas
- E[X] = Σ x·P(X=x) (discrete) or ∫ x·f(x)dx (continuous)
- Linearity of expectation: E[X+Y] = E[X]+E[Y] (always, even if dependent)
- Variance: Var(X) = E[X²] - (E[X])²

### Indicator Variables
For complex counting problems, define Iₖ = 1 if event k occurs, 0 otherwise.
E[Σ Iₖ] = Σ P(event k). Often dramatically simplifies calculations.

### Geometric Probability
P(hit region A) = Area(A)/Area(total). Be careful about the sample space definition.

### Conditional Probability
P(A|B) = P(A∩B)/P(B). Bayes: P(A|B) = P(B|A)P(A)/P(B).
For sequential problems, draw a tree and multiply along branches.

### Markov Chains / States
Define states, write transition matrix, solve system of equations for absorption probabilities or expected hitting times.




# Bundled reference: references/logic-patterns.md

# Logic Puzzle Patterns Reference

## Table of Contents
1. Constraint Types and How to Propagate Them
2. Grid Setup for Einstein/Zebra Puzzles
3. Syllogistic Reasoning
4. Common Trap Patterns
5. Worked Strategy Templates

---

## 1. Constraint Types and Propagation Rules

### Direct Assignment Constraints
**Form**: "X has attribute A" or "The person who Xs is Y"
**Propagation**: 
- Assign value immediately
- Eliminate that value from all other entities in the same attribute category

### Negative Constraints
**Form**: "X does not have attribute A" or "X is not Y"
**Propagation**: 
- Eliminate A from X's possibilities
- If only one possibility remains for X in that attribute, assign it

### Relational Constraints (Ordered Positions)
**Form**: "X is immediately to the left of Y", "X is somewhere left of Y", "X and Y are neighbors"

| Constraint | Propagation |
|-----------|-------------|
| pos(X) = pos(Y) + 1 (X right of Y) | Eliminate position 1 for X, last position for Y |
| \|pos(X) - pos(Y)\| = 1 (neighbors) | Neither can be isolated on opposite ends unless adjacent |
| pos(X) < pos(Y) (X left of Y) | Eliminate rightmost positions for X, leftmost for Y |
| pos(X) = n (direct position) | Remove X from all other positions |

### Biconditional / Paired Constraints
**Form**: "The X-owner is also the Y-owner" or "A and B have the same attribute"
**Propagation**:
- Any elimination in one propagates to the other
- E.g., "The milk drinker lives in house 3" → if we know who lives in house 3, they drink milk; if we know who drinks milk, they live in house 3

### Conditional Constraints
**Form**: "If X has A, then X has B"
**Propagation**:
- If X is assigned A → assign B to X
- If X cannot have B (B is eliminated) → eliminate A from X
- **Do NOT conclude**: "X does not have A, therefore X does not have B" (that's the inverse fallacy)

### "Same/Different" Constraints
**Form**: "X and Y have the same pet" or "No two people share the same job"
- Same: treat the two as a linked pair for that attribute
- Different: this is often already implicit in the grid structure (injective assignment)

---

## 2. Grid Setup for Einstein-Style Puzzles

### Standard Setup
Entities: n people/houses with k attributes each. Total cells: n × k.

```
      | Attr1 | Attr2 | Attr3 | ... |
------+-------+-------+-------+-----|
Pos 1 |       |       |       |     |
Pos 2 |       |       |       |     |
...   |       |       |       |     |
Pos n |       |       |       |     |
```

### Working the Grid
1. Apply all direct-assignment clues first. These give the most information for free.
2. Apply all negative clues next.
3. Then work relational clues using elimination.
4. Re-scan all clues after every new assignment — new assignments often unlock clues that were previously unhelpful.

### Position Constraints for 5-House Puzzles
- "House 1" = leftmost. "Middle house" = house 3.
- "Neighbor" always means adjacent (±1 position), not in same cluster.
- When a clue says "next to," it's symmetric: both orderings (A-B and B-A) are valid.

---

## 3. Syllogistic and Multi-Step Deduction

### Chain Reasoning
Link individual clues into chains to produce new facts:
```
Clue A: X → Y (if X then Y)
Clue B: Y → Z (if Y then Z)
∴ X → Z
```

### Disjunctive Syllogism
```
X or Y (one of these must be true)
not X (eliminate one)
∴ Y
```

### Hypothetical Elimination
When stuck: assume one branch → derive → check for contradiction.
```
Suppose attr(Person A) = Value V
→ From clue 3, attr(Person B) = Value W
→ From clue 7, attr(Person B) ≠ Value W
→ Contradiction. Therefore attr(Person A) ≠ Value V.
```
Always document which assumption you're under when doing this. Abandon cleanly if no contradiction is found (try the other value).

---

## 4. Common Trap Patterns

### The Converse Fallacy
**Trap**: "The Swede has a dog" → "The dog owner is Swedish"
Both follow from the same statement (it's biconditional), so this is actually fine.

But: "The Swede owns a dog" does NOT mean "Everyone with a dog is Swedish."

In most well-formed puzzles, attribute assignments are injective (one-to-one). So "X has attribute A" does imply "no one else has attribute A." In ambiguous puzzles, check whether attributes are exclusive before propagating this way.

### Confusing "Or" Types
- Inclusive or: A or B or both (default in logic)
- Exclusive or: A or B but not both
Competition logic puzzles usually deal with injective assignments (exclusive by structure), but word problems may use inclusive or.

### Forgetting Symmetry of Neighbor Clues
"A is next to B" means A could be left of B or right of B. Don't prematurely commit to one order without justification.

### Stacking Without Grounding
Chaining 4 inferences and then discovering one was based on an unverified assumption. Fix: number your steps, and label each with the clue(s) it uses.

---

## 5. Strategy Templates

### Template: "Who can occupy position X?"
1. List all constraints involving position X.
2. For each entity, check whether it violates any constraint if placed in X.
3. Assign the unique entity that doesn't violate anything.

### Template: "What value must attribute A of entity X be?"
1. Start with all possible values for A.
2. For each constraint involving X or attribute A, eliminate impossible values.
3. Check if remaining values = 1. If yes, assign.

### Template: "Which two entities are linked?"
When a clue connects two attributes (e.g., "milk drinker lives in house 3"):
1. If you know the value of one attribute, you know the entity for both.
2. Track this as a "soft link" until one side is resolved.

### Template: "Forced by position constraints"
For ordered constraints like pos(A) < pos(B) < pos(C) in a 5-house puzzle:
1. The gap forces: A can be at most position 3, C can be at minimum position 3.
2. This often pins one of them, which cascades.

---

## Quick Reference: What to Try When Stuck

| Situation | Action |
|-----------|--------|
| No obvious next step | Re-read every clue once; often one is underused |
| One entity has only one possible value remaining | Assign it immediately |
| One value can only go to one entity in a group | Assign it there |
| Two clues seem to interact | Try chaining them explicitly |
| Still stuck after above | Branch on most constrained unknown, look for contradiction |
| Contradiction found | Backtrack and flip the branching assumption |
| No contradiction found in branch | The branch must be explored further; try second-level branching |


Application output format: put the final answer only inside <answer></answer> tags at the end of the response. Keep any explanation outside those tags. Follow this output format if a skill below describes a different presentation format. Only the capabilities supplied by the application are available.
\end{Verbatim}
\end{PTwoSSkill}

\begin{PTwoSSkill}{PTwoSSkillBlue}{Spreadsheet manipulation: SpreadsheetBench}
\begin{Verbatim}[breaklines,breakanywhere,fontsize=\scriptsize]
# XLSX Manipulation Skill

## Overview

This skill enables programmatic creation, editing, and manipulation of Microsoft Excel (.xlsx) spreadsheets using the **openpyxl** library. Create professional spreadsheets with formulas, formatting, charts, and data validation without manual editing.

## How to Use

1. Describe the spreadsheet you want to create or modify
2. Provide data, formulas, or formatting requirements
3. I'll generate openpyxl code and execute it

**Example prompts:**
- "Create a budget spreadsheet with monthly tracking"
- "Add conditional formatting to highlight values above threshold"
- "Generate a pivot-table-like summary from this data"
- "Create a dashboard with charts and KPIs"

## Domain Knowledge

### openpyxl Fundamentals

```python
from openpyxl import Workbook, load_workbook
from openpyxl.styles import Font, Fill, Border, Alignment
from openpyxl.chart import BarChart, Reference

# Create new workbook
wb = Workbook()
ws = wb.active

# Or open existing
wb = load_workbook('existing.xlsx')
ws = wb.active
```

### Workbook Structure
```
Workbook
├── worksheets (sheets/tabs)
│   ├── cells (data storage)
│   ├── rows/columns (formatting)
│   ├── merged_cells
│   └── charts
├── defined_names (named ranges)
└── styles (formatting templates)
```

### Working with Cells

#### Basic Cell Operations
```python
# By cell reference
ws['A1'] = 'Header'
ws['B1'] = 42

# By row, column
ws.cell(row=1, column=3, value='Data')

# Multiple cells
ws['A1:C1'] = [['Col1', 'Col2', 'Col3']]

# Append rows
ws.append(['Row', 'Data', 'Here'])
```

#### Reading Cells
```python
# Single cell
value = ws['A1'].value

# Cell range
for row in ws['A1:C3']:
    for cell in row:
        print(cell.value)

# Iterate rows
for row in ws.iter_rows(min_row=1, max_row=10, min_col=1, max_col=3):
    for cell in row:
        print(cell.value)
```

### Formulas
```python
# Basic formulas
ws['D1'] = '=SUM(A1:C1)'
ws['D2'] = '=AVERAGE(A2:C2)'
ws['E1'] = '=IF(D1>100,"High","Low")'

# Named ranges
from openpyxl.workbook.defined_name import DefinedName
ref = "Sheet!$A$1:$C$10"
defn = DefinedName("SalesData", attr_text=ref)
wb.defined_names.add(defn)

# Use named range
ws['F1'] = '=SUM(SalesData)'
```

### Formatting

#### Cell Styles
```python
from openpyxl.styles import Font, Fill, PatternFill, Border, Side, Alignment

# Font
ws['A1'].font = Font(
    name='Arial',
    size=14,
    bold=True,
    italic=False,
    color='FF0000'  # Red
)

# Fill (background)
ws['A1'].fill = PatternFill(
    start_color='FFFF00',  # Yellow
    end_color='FFFF00',
    fill_type='solid'
)

# Border
thin_border = Border(
    left=Side(style='thin'),
    right=Side(style='thin'),
    top=Side(style='thin'),
    bottom=Side(style='thin')
)
ws['A1'].border = thin_border

# Alignment
ws['A1'].alignment = Alignment(
    horizontal='center',
    vertical='center',
    wrap_text=True
)
```

#### Number Formats
```python
# Currency
ws['B2'].number_format = '$#,##0.00'

# Percentage
ws['C2'].number_format = '0.00%'

# Date
ws['D2'].number_format = 'YYYY-MM-DD'

# Custom
ws['E2'].number_format = '#,##0.00 "units"'
```

#### Conditional Formatting
```python
from openpyxl.formatting.rule import ColorScaleRule, CellIsRule, FormulaRule
from openpyxl.styles import PatternFill

# Color scale (heatmap)
color_scale = ColorScaleRule(
    start_type='min', start_color='FF0000',
    end_type='max', end_color='00FF00'
)
ws.conditional_formatting.add('A1:A10', color_scale)

# Cell value rule
red_fill = PatternFill(start_color='FFCCCC', end_color='FFCCCC', fill_type='solid')
rule = CellIsRule(operator='greaterThan', formula=['100'], fill=red_fill)
ws.conditional_formatting.add('B1:B10', rule)
```

### Charts
```python
from openpyxl.chart import BarChart, LineChart, PieChart, Reference

# Prepare data
data = Reference(ws, min_col=2, min_row=1, max_col=3, max_row=5)
categories = Reference(ws, min_col=1, min_row=2, max_row=5)

# Bar Chart
chart = BarChart()
chart.type = "col"  # or "bar" for horizontal
chart.title = "Sales by Region"
chart.add_data(data, titles_from_data=True)
chart.set_categories(categories)
chart.shape = 4
ws.add_chart(chart, "E1")

# Line Chart
line = LineChart()
line.title = "Trend Analysis"
line.add_data(data, titles_from_data=True)
line.set_categories(categories)
ws.add_chart(line, "E15")

# Pie Chart
pie = PieChart()
pie.add_data(data, titles_from_data=True)
pie.set_categories(categories)
ws.add_chart(pie, "M1")
```

### Data Validation
```python
from openpyxl.worksheet.datavalidation import DataValidation

# Dropdown list
dv = DataValidation(
    type="list",
    formula1='"Option1,Option2,Option3"',
    allow_blank=True
)
dv.error = "Please select from list"
dv.errorTitle = "Invalid Input"
ws.add_data_validation(dv)
dv.add('A1:A100')

# Number range
dv_num = DataValidation(
    type="whole",
    operator="between",
    formula1="1",
    formula2="100"
)
ws.add_data_validation(dv_num)
dv_num.add('B1:B100')
```

### Sheet Operations
```python
# Create new sheet
ws2 = wb.create_sheet("Data")
ws3 = wb.create_sheet("Summary", 0)  # At position 0

# Rename
ws.title = "Main Report"

# Delete
del wb["Sheet2"]

# Copy
source = wb["Template"]
target = wb.copy_worksheet(source)
```

### Row/Column Operations
```python
# Set column width
ws.column_dimensions['A'].width = 20

# Set row height
ws.row_dimensions[1].height = 30

# Hide column
ws.column_dimensions['C'].hidden = True

# Freeze panes
ws.freeze_panes = 'B2'  # Freeze row 1 and column A

# Auto-filter
ws.auto_filter.ref = "A1:D100"
```

## Best Practices

1. **Use Templates**: Start with a .xlsx template for complex formatting
2. **Batch Operations**: Minimize cell-by-cell operations for speed
3. **Named Ranges**: Use defined names for clearer formulas
4. **Data Validation**: Add validation to prevent input errors
5. **Save Incrementally**: For large files, save periodically

## Common Patterns

### Data Import
```python
def import_csv_to_xlsx(csv_path, xlsx_path):
    import csv
    wb = Workbook()
    ws = wb.active
    
    with open(csv_path) as f:
        reader = csv.reader(f)
        for row in reader:
            ws.append(row)
    
    wb.save(xlsx_path)
```

### Report Template
```python
def create_monthly_report(data, output_path):
    wb = Workbook()
    ws = wb.active
    ws.title = "Monthly Report"
    
    # Headers
    headers = ['Date', 'Revenue', 'Expenses', 'Profit']
    ws.append(headers)
    
    # Style headers
    for col in range(1, 5):
        cell = ws.cell(1, col)
        cell.font = Font(bold=True)
        cell.fill = PatternFill('solid', fgColor='4472C4')
        cell.font = Font(bold=True, color='FFFFFF')
    
    # Data
    for row in data:
        ws.append(row)
    
    # Add totals
    last_row = len(data) + 1
    ws.cell(last_row + 1, 1, 'TOTAL')
    ws.cell(last_row + 1, 2, f'=SUM(B2:B{last_row})')
    ws.cell(last_row + 1, 3, f'=SUM(C2:C{last_row})')
    ws.cell(last_row + 1, 4, f'=SUM(D2:D{last_row})')
    
    wb.save(output_path)
```

## Examples

### Example 1: Budget Tracker
```python
from openpyxl import Workbook
from openpyxl.styles import Font, PatternFill, Alignment, Border, Side
from openpyxl.utils import get_column_letter

wb = Workbook()
ws = wb.active
ws.title = "Budget 2024"

# Headers
months = ['Category', 'Jan', 'Feb', 'Mar', 'Q1 Total']
ws.append(months)

# Categories and data
budget_data = [
    ['Salary', 5000, 5000, 5000],
    ['Rent', -1500, -1500, -1500],
    ['Utilities', -200, -180, -220],
    ['Food', -400, -450, -380],
    ['Transport', -150, -160, -140],
    ['Entertainment', -200, -250, -200],
]

for row in budget_data:
    ws.append(row + [f'=SUM(B{ws.max_row + 1}:D{ws.max_row + 1})'])

# Total row
ws.append(['TOTAL', 
    f'=SUM(B2:B{ws.max_row})',
    f'=SUM(C2:C{ws.max_row})',
    f'=SUM(D2:D{ws.max_row})',
    f'=SUM(E2:E{ws.max_row})'
])

# Formatting
header_fill = PatternFill('solid', fgColor='366092')
header_font = Font(bold=True, color='FFFFFF')

for cell in ws[1]:
    cell.fill = header_fill
    cell.font = header_font
    cell.alignment = Alignment(horizontal='center')

# Currency format
for row in ws.iter_rows(min_row=2, min_col=2, max_col=5):
    for cell in row:
        cell.number_format = '$#,##0.00'

# Column widths
ws.column_dimensions['A'].width = 15
for col in range(2, 6):
    ws.column_dimensions[get_column_letter(col)].width = 12

wb.save('budget_2024.xlsx')
```

### Example 2: Sales Dashboard
```python
from openpyxl import Workbook
from openpyxl.chart import BarChart, PieChart, Reference
from openpyxl.styles import Font, PatternFill

wb = Workbook()
ws = wb.active
ws.title = "Sales Dashboard"

# Data
ws.append(['Region', 'Q1', 'Q2', 'Q3', 'Q4'])
data = [
    ['North', 150000, 165000, 180000, 195000],
    ['South', 120000, 125000, 140000, 155000],
    ['East', 180000, 190000, 210000, 225000],
    ['West', 95000, 110000, 125000, 140000],
]
for row in data:
    ws.append(row)

# Bar Chart
data_ref = Reference(ws, min_col=2, min_row=1, max_col=5, max_row=5)
cats_ref = Reference(ws, min_col=1, min_row=2, max_row=5)

bar = BarChart()
bar.type = "col"
bar.title = "Quarterly Sales by Region"
bar.add_data(data_ref, titles_from_data=True)
bar.set_categories(cats_ref)
bar.height = 10
bar.width = 15
ws.add_chart(bar, "A8")

# Pie Chart - Q4 breakdown
pie_data = Reference(ws, min_col=5, min_row=1, max_row=5)
pie = PieChart()
pie.title = "Q4 Market Share"
pie.add_data(pie_data, titles_from_data=True)
pie.set_categories(cats_ref)
ws.add_chart(pie, "J8")

wb.save('sales_dashboard.xlsx')
```

## Limitations

- Cannot execute VBA macros
- Complex pivot tables not fully supported
- Limited sparkline support
- External data connections not supported
- Some advanced chart types unavailable

## Installation

```bash
pip install openpyxl
```

## Resources

- [openpyxl Documentation](https://openpyxl.readthedocs.io/)
- [GitHub Repository](https://github.com/theorchard/openpyxl)
- [Working with Styles](https://openpyxl.readthedocs.io/en/stable/styles.html)
\end{Verbatim}
\end{PTwoSSkill}

The spreadsheet source is distributed under the following license.
\begin{PTwoSSkill}{PTwoSSkillBlue}{Spreadsheet skill: license notice}
\begin{Verbatim}[breaklines,breakanywhere,fontsize=\scriptsize]
MIT License

Copyright (c) 2026 Claude Office Skills Contributors

Permission is hereby granted, free of charge, to any person obtaining a copy
of this software and associated documentation files (the "Software"), to deal
in the Software without restriction, including without limitation the rights
to use, copy, modify, merge, publish, distribute, sublicense, and/or sell
copies of the Software, and to permit persons to whom the Software is
furnished to do so, subject to the following conditions:

The above copyright notice and this permission notice shall be included in all
copies or substantial portions of the Software.

THE SOFTWARE IS PROVIDED "AS IS", WITHOUT WARRANTY OF ANY KIND, EXPRESS OR
IMPLIED, INCLUDING BUT NOT LIMITED TO THE WARRANTIES OF MERCHANTABILITY,
FITNESS FOR A PARTICULAR PURPOSE AND NONINFRINGEMENT. IN NO EVENT SHALL THE
AUTHORS OR COPYRIGHT HOLDERS BE LIABLE FOR ANY CLAIM, DAMAGES OR OTHER
LIABILITY, WHETHER IN AN ACTION OF CONTRACT, TORT OR OTHERWISE, ARISING FROM,
OUT OF OR IN CONNECTION WITH THE SOFTWARE OR THE USE OR OTHER DEALINGS IN THE
SOFTWARE.
\end{Verbatim}
\end{PTwoSSkill}

\subsection{Final Selected P2S Skills}
\label{app:final_skills}
We present one final selected skill for each model--task pair reported
in the main results, using the first completed run throughout.
The listings cover all five target models and all four tasks; the
Llama-3.2-1B--SpreadsheetBench pair is not reported in the main table.
The listings reproduce the artifacts used for the reported P2S evaluations,
including retained initial skills and data-informed spreadsheet drafts.
As above, the boxes reproduce the instruction bodies supplied to the
target models, with packaging metadata omitted and line endings normalized.

\subsubsection{SearchQA}
\label{app:final_skills_searchqa}

\begin{PTwoSSkill}{PTwoSSkillGreen}{SearchQA: Qwen3-8B}
\begin{Verbatim}[breaklines,breakanywhere,fontsize=\scriptsize]
<answer>Approach each clue by first identifying the required answer type—whether it's a decade, exact year, common name, or formal variant—and use all constraint words to narrow possibilities. Avoid distractions from famous but loosely connected entities and prioritize the standard, common-name form over technical or alternate variations. Ensure precision by including necessary articles or descriptors that are part of the canonical answer. When context or embedded media is present, focus on the core concept implied by the surrounding text rather than literal details.</answer>
\end{Verbatim}
\end{PTwoSSkill}

\begin{PTwoSSkill}{PTwoSSkillGreen}{SearchQA: Qwen3-32B}
\begin{Verbatim}[breaklines,breakanywhere,fontsize=\scriptsize]
<answer>Before responding to a Jeopardy! clue, methodically dissect the full wording to identify the exact semantic target, including person, place, thing, or concept, and account for all constraints—wordplay, style, and structure. Avoid jumping to conclusions based on a single keyword; instead, ensure every part of the clue supports the answer. Verify that the proposed response fully explains any pun, quotation, or double meaning. Pay strict attention to form: use the correct phrasing, titles, articles, and ordinals as expected on the show. Favor precise recall over approximate guesses, especially in history and pop culture, and rely on distinctive identifiers to avoid plausible but incorrect substitutions. Finally, validate the response against all elements of the clue—semantic type, context, wording, and date—before committing to the answer.</answer>
\end{Verbatim}
\end{PTwoSSkill}

\begin{PTwoSSkill}{PTwoSSkillGreen}{SearchQA: Llama-3.2-1B}
\begin{Verbatim}[breaklines,breakanywhere,fontsize=\scriptsize]
<deliberate "cue the correct entity" process for answering prompts, using core retrieval cues like title, phrase, or specific details; verify answer semantic fit with all provided information; avoid generic or unrelated guesses; and strictly adhere to clause structure and formatting guidelines.</deliberate>
\end{Verbatim}
\end{PTwoSSkill}

\begin{PTwoSSkill}{PTwoSSkillGreen}{SearchQA: Claude Haiku 4.5}
\begin{Verbatim}[breaklines,breakanywhere,fontsize=\scriptsize]
Answer the Jeopardy! question. Reply with only the answer, inside <answer></answer> tags.
\end{Verbatim}
\end{PTwoSSkill}

\begin{PTwoSSkill}{PTwoSSkillGreen}{SearchQA: GPT-5.5}
\begin{Verbatim}[breaklines,breakanywhere,fontsize=\scriptsize]
Answer the Jeopardy! clue. Reply only with the final response inside <answer></answer> tags; do not include “What is/Who is”.

Process:
1. Treat the clue as asking for Jeopardy’s expected response, not an explanation.
2. Infer the requested response type from wording, syntax, and category-like signals: person, title, place, institution, object, term, number, group, action, time, shared name, etc.
3. If the clue is a fragment, list, caption, quotation, or media reference, infer the missing task from context; never answer blank just because the clue is not a full sentence.
4. For quotations, decide whether the clue asks for speaker, character, work, creator, or quoted words; answer at the level implied by the phrasing.
5. For inaccessible media, use visible text, captions, descriptive wording, era cues, and clue wording to infer the expected response.
6. Recall several plausible candidates, then choose the one satisfying every stated constraint: dates, relationships, titles, definitions, locations, word order, and wordplay.
7. Match exact granularity: not broader, narrower, more explanatory, or merely related; include the generic noun/class if it is needed to make the named answer complete.
8. If the clue asks for a shared name of entities, give the shared name in the form normally used for those entities, with the entity type included when natural or clarifying.
9. Prefer the shortest standard Jeopardy-style response that fully answers the clue; do not add descriptive adjectives unless part of the accepted name.
10. Use common English-language spelling and transliteration as found in general U.S. reference sources; avoid unnecessary special diacritics.
11. Include natural articles for singular count nouns when ordinary speech requires them, especially with “a word/term/name for” clues.
12. Include “the” for names, groups, works, objects, or institutions conventionally said with “the.”
13. Include possessives such as “your” when the clue’s natural response is a body part or personal-reference phrase.
14. Preserve necessary disambiguation with a role, title, parenthetical, or generic noun when a bare name would be unclear or incomplete.
15. For plurals, match what the clue defines: distinguish plural agents/items from a singular action, event, term, or example.
16. For clues giving multiple items, identify whether the answer is the shared rule, the missing/odd item, or one listed item singled out by the category-style context.
17. For wordplay, solve both the literal definition and the language trick; output the target answer, not the explanation.
18. If the clue asks for a time, choose the intended granularity: day, month, year, decade, century, era, or duration.
19. If a standalone proper noun appears, consider associated larger entity, origin, creator, member, classification, or counterpart before answering.
20. Use conventional title/name wording; keep ampersands or exact wording only when part of the familiar response.
21. Do not include alternatives, hedging, explanations, citations, or repeated clue wording.
22. Final output format must be exactly: <answer>final answer</answer>
\end{Verbatim}
\end{PTwoSSkill}

\subsubsection{SQuAD}
\label{app:final_skills_squad}

\begin{PTwoSSkill}{PTwoSSkillGreen}{SQuAD: Qwen3-8B}
\begin{Verbatim}[breaklines,breakanywhere,fontsize=\scriptsize]
You are a passage-based question-answering model.

Process:
1. Read the entire passage/support and the question carefully before answering.
2. Answer using only information explicitly stated or directly implied by the passage.
3. Locate the sentence or phrase in the passage that most directly answers the question.
4. Prefer an exact text span from the passage over a paraphrase.
5. Choose the smallest answer span that fully satisfies the question.
6. Match the expected granularity of the question:
   - If the question asks for a term, give just the term.
   - If it asks for a type/kind/form, give just the distinguishing word or phrase, not the repeated category word unless it is part of the term.
   - If it asks for a value, condition, place, time, or property, give only that value/condition/place/time/property.
7. Do not include extra explanation, examples, surrounding sentence text, or multiple alternatives unless the question explicitly asks for them.
8. Do not add broader categories when the passage gives a more specific answer.
9. Do not add narrower details when the question asks for a general term.
10. Preserve important wording from the passage, including singular/plural form when it matters.
11. If two candidate answers seem possible, choose the one that most directly matches the wording and grammar of the question.
12. Before finalizing, check that the answer:
    - is supported by the passage,
    - answers exactly what was asked,
    - is not overly broad or overly specific,
    - contains no unnecessary repeated words from the question.
13. Put only the final answer inside <answer></answer> tags.
\end{Verbatim}
\end{PTwoSSkill}

\begin{PTwoSSkill}{PTwoSSkillGreen}{SQuAD: Qwen3-32B}
\begin{Verbatim}[breaklines,breakanywhere,fontsize=\scriptsize]
You are a question-answering model. Read the passage/support carefully and answer the question using only information from it.

Process:
1. Identify exactly what the question is asking for by turning it into a blank statement.
2. Find the sentence or phrase in the support that fills that blank.
3. Choose the shortest complete answer that satisfies the question.
4. Do not repeat the question, restate the whole sentence, or include extra explanation.
5. Prefer wording from the support, but adjust capitalization, number, or form only when needed to fit the question grammatically.
6. If the support gives a longer phrase, remove unnecessary surrounding words, articles, or repeated category words already supplied by the question.
7. If the question asks for a term/name, give only the term/name.
8. If the question asks what happens or what something does, give only the concise action, result, or property.
9. If multiple nearby words could answer, pick the one that most directly fills the blank and matches all constraints in the question.
10. Before finalizing, check that the answer is not longer than necessary and is directly supported by the passage.

Output only the final answer inside <answer></answer> tags.
\end{Verbatim}
\end{PTwoSSkill}

\begin{PTwoSSkill}{PTwoSSkillGreen}{SQuAD: Llama-3.2-1B}
\begin{Verbatim}[breaklines,breakanywhere,fontsize=\scriptsize]
You are a precise passage-based question-answering model.

Read the input as two parts:
1. The Question: what must be answered.
2. The Support/Passage: the only source of truth.

Process:
- Identify the key terms and requested information in the question.
- Search the passage for the sentence or phrase that directly answers the question.
- Prefer an answer that is explicitly stated in the passage, using the passage’s wording when possible.
- If the question is phrased as a fill-in or asks “what/which/who/where/when,” return the shortest noun phrase, name, term, number, or expression that completes it.
- Do not answer with a full sentence unless the question specifically requires one.
- Do not copy the question.
- Do not include explanations, citations, or extra context.
- Do not infer beyond the passage if an explicit answer is present.
- If multiple candidates appear, choose the one that satisfies all constraints in the question and is closest to the matching support sentence.
- Match the expected granularity: return only the specific answer, not a broader phrase containing unnecessary words.
- Preserve important wording from the passage; minor capitalization is acceptable.
- If the passage truly does not contain enough information, answer with the shortest supported statement indicating that it is not given.

Output format:
- Output exactly one final answer.
- The answer must be enclosed in one pair of tags:
<answer>answer text</answer>
- Do not put anything before or after the tags.
\end{Verbatim}
\end{PTwoSSkill}

\begin{PTwoSSkill}{PTwoSSkillGreen}{SQuAD: Claude Haiku 4.5}
\begin{Verbatim}[breaklines,breakanywhere,fontsize=\scriptsize]
You are a precise extractive question-answering model.

Read the entire input, separating the question from the supporting passage. Answer using only information stated or directly implied by the passage.

Process:
1. Determine exactly what the question is asking for: a count, name, term, phrase, property, location, cause, etc.
2. Find the sentence or clause in the passage that contains the answer.
3. Select the smallest span of text that fully answers the question.
4. Match the expected granularity:
   - If the question asks for a count, give only the count, not the counted object if it is already in the question.
   - If the question asks for a name/term/part/type/property, give only that name or phrase.
   - If the question is phrased like a fill-in-the-blank, give only the words that fill the blank.
   - Do not repeat words from the question unless they are necessary to make the answer complete.
5. Remove unnecessary leading words such as “the,” “a,” or “an” unless they are part of the required phrase.
6. Do not add explanations, full sentences, punctuation, or formatting beyond the required tags.
7. Do not include extra information from the passage after the answer span.
8. If two spans seem possible, choose the shortest one that satisfies every constraint in the question.
9. Before finalizing, check that the answer is directly supported by the passage and contains no unsupported additions.

Output exactly:
<answer>ANSWER</answer>
\end{Verbatim}
\end{PTwoSSkill}

\begin{PTwoSSkill}{PTwoSSkillGreen}{SQuAD: GPT-5.5}
\begin{Verbatim}[breaklines,breakanywhere,fontsize=\scriptsize]
You are a question-answering model. Read the passage/support carefully and answer the question using only that information.

Process:
1. Identify the exact slot requested by the question from its wording, grammar, and any noun after “what/which/how many.”
2. Locate the passage phrase that directly fills that slot; do not infer from outside knowledge.
3. Start with the smallest span that answers the slot, then expand only if needed for correctness or distinction.
4. Prefer wording from the passage, but adjust capitalization and grammar only as necessary.
5. Do not return a full sentence unless the question specifically asks for one.
6. Remove any word or phrase already supplied by the question if the answer remains clear and grammatical.
7. If the question asks for a kind/type/class/form of something, return only the distinguishing name or modifier, not the repeated general noun.
8. If the passage answer is a compound phrase whose head noun is already stated or implied by the question, return the modifier/name alone.
9. If the question asks for an item possessed, present, made, detected, called, etc., return the minimal item name; do not include additional coordinated items unless the question asks for all of them.
10. If the question includes approximation wording, do not repeat approximation words in the answer; keep only the value and unit.
11. Omit articles, determiners, and surrounding clauses unless essential to the answer.
12. Omit possessives or source labels that merely repeat the question’s context.
13. Keep necessary modifiers only when removing them would make the answer ambiguous, wrong, or too broad.
14. For “known as/called/termed” questions, prefer the named term itself; drop generic label words already present in the question.
15. For “how many/how much” questions, include the number and required unit, but no extra qualifier already present in the question.
16. For singular “a/an/what” slots, avoid returning multiple coordinated answers unless the passage explicitly makes the whole coordination the single named answer.
17. If several supported spans are possible, choose the one most directly stated and best matched to the question’s grammar.
18. Check by inserting the answer into the question’s blank; revise if redundant, overlong, too broad, too narrow, or ungrammatical.
19. Never add explanations, examples, aliases, parentheticals, or extra context.
20. If no supported answer is available, return the shortest passage-supported response rather than guessing.

Format:
Return only the final answer inside <answer></answer> tags.
\end{Verbatim}
\end{PTwoSSkill}

\subsubsection{AIME}
\label{app:final_skills_aime}

\begin{PTwoSSkill}{PTwoSSkillGreen}{AIME: Qwen3-8B}
\begin{Verbatim}[breaklines,breakanywhere,fontsize=\scriptsize]
You are a rigorous mathematical problem-solving assistant. Given a mathematical question, provide a correct, complete solution and put the final result inside `<answer></answer>` tags.

Process:
1. Read the entire prompt carefully before solving; do not infer missing conditions.
2. Identify exactly what is being asked and what form the answer should have: a value, expression, set, count, ratio, statement, or proof.
3. Translate the given information into precise mathematical relations, preserving all restrictions, units, bounds, and wording.
4. Introduce notation only as needed, and define it clearly.
5. Work with exact values whenever possible; avoid decimal approximations unless the prompt requires them.
6. Derive the result step by step using valid implications. Do not skip cases that could affect the answer.
7. If the situation branches into cases, make the cases exhaustive and non-overlapping, then combine them carefully.
8. Watch for edge cases: zero values, endpoints, equality cases, impossible conditions, repeated counting, and hidden domain restrictions.
9. After obtaining a candidate answer, verify it against every original condition and against the precise question asked.
10. Recheck key arithmetic and algebra before finalizing. If a contradiction or mismatch appears, restart from the setup rather than forcing the answer.
11. If no object satisfies the conditions, state that explicitly with justification.
12. Keep the written solution concise but complete enough that each conclusion is justified.
13. The final line must contain the answer in the exact requested granularity inside `<answer></answer>` tags.
\end{Verbatim}
\end{PTwoSSkill}

\begin{PTwoSSkill}{PTwoSSkillGreen}{AIME: Qwen3-32B}
\begin{Verbatim}[breaklines,breakanywhere,fontsize=\scriptsize]
You are a rigorous math olympiad problem-solving assistant. Given a problem, produce a correct, complete solution and put the final answer inside <answer></answer> tags.

Process:
1. Read the entire prompt carefully before solving. Identify exactly what is being asked, including whether the requested output is a value, expression, ordered pair, simplified fraction, sum of parameters, number of arrangements, etc.
2. Extract all given quantities, conditions, restrictions, and required formats. Do not ignore words such as positive, distinct, integer, maximum, minimum, externally/internally, inclusive/exclusive, or up to equivalence.
3. Define variables clearly and translate the conditions into equations, inequalities, or logical constraints.
4. Solve step by step using exact reasoning. Prefer exact arithmetic over decimals unless an approximation is explicitly requested.
5. Track units and conversion factors carefully. If quantities are expressed in different units or denominations, convert before subtracting, adding, or comparing.
6. If there may be multiple cases, enumerate them systematically and ensure no case is missed or double-counted.
7. If equivalences or symmetries affect the result, account for duplicates only after first counting the fully labeled possibilities.
8. After deriving a candidate answer, substitute it back into every original condition and verify it satisfies the question asked.
9. Re-check arithmetic, simplification, and interpretation of the requested final granularity. Make sure you answer the final requested quantity, not an intermediate quantity.
10. If more than one answer is possible, include all valid answers unless the prompt specifies a selection criterion.
11. If a problem statement appears to ask for a conventional simplified form, reduce fractions, combine requested parameters, and use the exact notation requested.
12. Do not stop mid-solution. Always provide a coherent derivation followed by the final answer.

Formatting:
- Present the reasoning clearly and concisely.
- The final line must contain only the final result inside <answer></answer> tags, with any required units or notation included inside the tags.
\end{Verbatim}
\end{PTwoSSkill}

\begin{PTwoSSkill}{PTwoSSkillGreen}{AIME: Llama-3.2-1B}
\begin{Verbatim}[breaklines,breakanywhere,fontsize=\scriptsize]
You are a careful math problem-solving assistant for competition-style questions. Your goal is to produce a correct, complete solution and a final answer inside <answer></answer> tags.

Process:
1. Read the entire prompt before solving. Identify exactly what is being asked and the required form of the answer.
2. Extract all given conditions, definitions, restrictions, diagrams/options mentioned in text, and hidden constraints such as positivity, integrality, order, distinctness, “for every,” or “least/greatest.”
3. Introduce clear notation only as needed. Translate the problem into precise mathematical statements without changing its meaning.
4. Work with exact values whenever possible; avoid decimal approximations unless the problem asks for them.
5. Proceed step by step, justifying each nontrivial inference. Do not skip verification of boundary cases, equality cases, or special cases.
6. If several interpretations or candidates are possible, test each against every condition in the prompt and discard invalid ones explicitly or implicitly.
7. For questions with answer choices, determine the answer mathematically first when possible, then match it to the listed choice. If the prompt asks for a letter, final answer should be the letter.
8. For “minimum,” “maximum,” “least,” or “greatest” questions, prove both existence of your candidate and impossibility of any better value.
9. For identities or statements involving all allowed inputs, verify the condition globally, not just with examples.
10. For counting or combinatorial quantities, ensure cases are disjoint and exhaustive; check for overcounting or undercounting.
11. For word problems, track units and quantities carefully and confirm the final result answers the requested quantity, not an intermediate one.
12. After deriving a candidate answer, substitute or otherwise check it against the original problem statement.
13. If a check fails or an assumption was unjustified, restart from the relevant step rather than forcing the result.
14. Keep the written solution concise but complete. Avoid rambling, irrelevant methods, or unfinished text.
15. The final line must contain the answer inside <answer></answer> tags. Put only the final requested object inside the tags unless the problem explicitly asks for a proof or explanation.
\end{Verbatim}
\end{PTwoSSkill}

\begin{PTwoSSkill}{PTwoSSkillGreen}{AIME: Claude Haiku 4.5}
\begin{Verbatim}[breaklines,breakanywhere,fontsize=\scriptsize]
You are a rigorous math problem-solving assistant. Given a competition-style math question, produce a correct, complete solution and put the final answer inside <answer></answer> tags.

Read the entire prompt carefully, including wording, requested form, and all conditions. If text appears incomplete, solve what is determinable from the given text; declare insufficiency only after showing that multiple answers remain possible.

Identify exactly what must be output: a value, expression, set/list, count, proof, relation, or simplified form. Match the requested granularity and format.

Do not assume extra facts. Make any necessary interpretation explicit, choosing the one most consistent with the full statement.

Use exact values unless approximation is explicitly requested. If a fraction is requested, give a reduced fraction with positive denominator; do not give an equivalent decimal or multiple forms.

Before computing, introduce notation and translate every condition into precise mathematical statements, including hidden restrictions such as positivity, integrality, distinctness, order, equality cases, and domain limitations.

Derive conclusions from reversible steps whenever possible. Whenever a step may introduce or lose solutions, record the condition and check it later.

Avoid unsupported pattern matching or guessing. If a candidate is found by inspection, prove it satisfies the conditions and prove no other relevant candidates were missed.

For equations or constraints, determine both necessity and sufficiency: solve all branches created by operations, then substitute surviving candidates into the original statement.

For repeated, limiting, recursive, or implicitly defined quantities, define the whole quantity as a variable only after checking the definition is meaningful under the stated conditions; verify any resulting candidate directly.

For simplification tasks, transform the entire expression carefully, not term-by-term by false identities. Confirm the final expression is exactly equal to the original, preferably by an independent algebraic or numerical check.

For counting or listing, define the objects being counted, account for symmetry or indistinguishability, and verify that cases are exhaustive, non-overlapping, and satisfy the original conditions.

For word problems, keep units and meanings attached to variables. Translate rates, totals, ratios, and shared quantities explicitly before solving.

For spatial or diagram-based problems, state all inferred lengths/relations from the given data, and check that the constructed configuration actually exists.

For optimization or extremal questions, verify boundary cases as well as interior candidates.

After obtaining a candidate answer, re-read the original problem and test the candidate against every condition and the requested output form.

Recheck arithmetic, signs, exponents, denominators, simplifications, and boundary restrictions. If verification fails, restart from the statement rather than patching the result.

Do not finish with a vague description if the problem asks for an explicit value and one can be obtained. Conversely, do not invent a simpler closed form if none is justified.

Write a clear solution with concise explanations and equations. The final line must be exactly of the form <answer>...</answer>.

Inside the final tag, put only the requested final answer, with no extra words, no alternatives, and no boxing, unless the problem explicitly asks for a sentence or proof statement.
\end{Verbatim}
\end{PTwoSSkill}

\begin{PTwoSSkill}{PTwoSSkillGreen}{AIME: GPT-5.5}
\begin{Verbatim}[breaklines,breakanywhere,fontsize=\scriptsize]
You are a math problem-solving assistant. You will be given a math olympiad question and must provide a correct and complete solution. Use logical reasoning and mathematical principles to solve the problem. Put the final answer inside <answer></answer> tags.
\end{Verbatim}
\end{PTwoSSkill}

\subsubsection{SpreadsheetBench}
\label{app:final_skills_spreadsheet}

\begin{PTwoSSkill}{PTwoSSkillGreen}{SpreadsheetBench: Qwen3-8B}
\begin{Verbatim}[breaklines,breakanywhere,fontsize=\scriptsize]
```

# Spreadsheet Manipulation Skill

## General Procedures

### 1. Load and Inspect a Workbook
```bash
python -c "import openpyxl; wb = openpyxl.load_workbook('input.xlsx'); print(wb.sheetnames)"
```
Use `openpyxl` to examine sheet names, column headers, and data ranges before making changes.

### 2. Add a New Column with a Formula
```python
from openpyxl import load_workbook

wb = load_workbook('input.xlsx')
ws = wb.active

# Insert new column at position (e.g., after 'Units_Sold' which is column D)
ws.insert_cols(5)  # inserts new column at index 5 (F)

# Add header
ws['E1'] = "Margin_Percent"

# Add formula to each cell in the new column (adjust as needed)
for row in ws.iter_rows(min_row=2, max_col=5, max_row=ws.max_row):
    cost = row[3].value  # Column D
    price = row[4].value  # Column E
    margin = ((price - cost) / price) * 100
    row[4].value = round(margin, 2)

wb.save('output.xlsx')
```

### 3. VLOOKUP-like Operation Between Sheets
```python
from openpyxl import load_workbook

wb = load_workbook('input.xlsx')
sales = wb['Sales_Orders']
catalog = wb['Product_Catalog']

# Build a dictionary for lookup
price_map = {row[0].value: row[1].value for row in catalog.iter_rows(min_row=2, values_only=True)}

# Fill in Unit_Price column using the lookup
for row in sales.iter_rows(min_row=2):
    product_code = row[0].value
    unit_price_cell = row[4]  # Assuming column E is Unit_Price
    unit_price_cell.value = price_map.get(product_code)

wb.save('output.xlsx')
```

### 4. Create a Summary Sheet with Totals
```python
from openpyxl import load_workbook
from collections import defaultdict

wb = load_workbook('input.xlsx')
transactions = wb['Transactions']
summary = wb.create_sheet("Product_Summary")

# Create a mapping of product to totals
product_totals = defaultdict(lambda: {'quantity': 0, 'revenue': 0.0})

for row in transactions.iter_rows(min_row=2, values_only=True):
    product, qty, price = row[0], row[1], row[2]
    product_totals[product]['quantity'] += qty
    product_totals[product]['revenue'] += qty * price

# Write summary sheet
summary.append(["Product", "Total Quantity Sold", "Total Revenue"])
for product in sorted(product_totals):
    total_qty = product_totals[product]['quantity']
    total_rev = product_totals[product]['revenue']
    summary.append([product, total_qty, total_rev])

wb.save('output.xlsx')
```

### 5. Save and Close the Workbook
```python
wb.save('output.xlsx')
```

## Verification Checklist

Before saving the output file, the agent should verify the following:

- ✅ All new columns have correct headers and are placed at the correct position.
- ✅ All formulas are correctly applied and return expected values.
- ✅ Decimal precision is correct (e.g., rounded to 2 decimal places).
- ✅ Lookup operations match the correct rows and columns.
- ✅ Summary sheets include all relevant data and are sorted as requested.
- ✅ Original sheets remain unchanged unless explicitly instructed to modify them.
- ✅ Output file is saved with correct name and format (`.xlsx`).
- ✅ No extra sheets or rows/columns are added unless specified.
- ✅ All values match the reference solution at the specified ranges (for verification).
\end{Verbatim}
\end{PTwoSSkill}

\begin{PTwoSSkill}{PTwoSSkillGreen}{SpreadsheetBench: Qwen3-32B}
\begin{Verbatim}[breaklines,breakanywhere,fontsize=\scriptsize]
```

# Spreadsheet Manipulation Skill

## General Procedures

### 1. Load and Inspect a Workbook
```bash
python -c "import openpyxl; wb = openpyxl.load_workbook('input.xlsx'); print(wb.sheetnames)"
```
Use `openpyxl` to examine sheet names, column headers, and data ranges before making changes.

### 2. Add a New Column with a Formula
```python
from openpyxl import load_workbook

wb = load_workbook('input.xlsx')
ws = wb.active

# Insert new column at position (e.g., after 'Units_Sold' which is column D)
ws.insert_cols(5)  # inserts new column at index 5 (F)

# Add header
ws['E1'] = "Margin_Percent"

# Add formula to each cell in the new column (adjust as needed)
for row in ws.iter_rows(min_row=2, max_col=5, max_row=ws.max_row):
    cost = row[3].value  # Column D
    price = row[4].value  # Column E
    margin = ((price - cost) / price) * 100
    row[4].value = round(margin, 2)

wb.save('output.xlsx')
```

### 3. VLOOKUP-like Operation Between Sheets
```python
from openpyxl import load_workbook

wb = load_workbook('input.xlsx')
sales = wb['Sales_Orders']
catalog = wb['Product_Catalog']

# Build a dictionary for lookup
price_map = {row[0].value: row[1].value for row in catalog.iter_rows(min_row=2, values_only=True)}

# Fill in Unit_Price column using the lookup
for row in sales.iter_rows(min_row=2):
    product_code = row[0].value
    unit_price_cell = row[4]  # Assuming column E is Unit_Price
    unit_price_cell.value = price_map.get(product_code)

wb.save('output.xlsx')
```

### 4. Create a Summary Sheet with Totals
```python
from openpyxl import load_workbook
from collections import defaultdict

wb = load_workbook('input.xlsx')
transactions = wb['Transactions']
summary = wb.create_sheet("Product_Summary")

# Create a mapping of product to totals
product_totals = defaultdict(lambda: {'quantity': 0, 'revenue': 0.0})

for row in transactions.iter_rows(min_row=2, values_only=True):
    product, qty, price = row[0], row[1], row[2]
    product_totals[product]['quantity'] += qty
    product_totals[product]['revenue'] += qty * price

# Write summary sheet
summary.append(["Product", "Total Quantity Sold", "Total Revenue"])
for product in sorted(product_totals):
    total_qty = product_totals[product]['quantity']
    total_rev = product_totals[product]['revenue']
    summary.append([product, total_qty, total_rev])

wb.save('output.xlsx')
```

### 5. Save and Close the Workbook
```python
wb.save('output.xlsx')
```

## Verification Checklist

Before saving the output file, the agent should verify the following:

- ✅ All new columns have correct headers and are placed at the correct position.
- ✅ All formulas are correctly applied and return expected values.
- ✅ Decimal precision is correct (e.g., rounded to 2 decimal places).
- ✅ Lookup operations match the correct rows and columns.
- ✅ Summary sheets include all relevant data and are sorted as requested.
- ✅ Original sheets remain unchanged unless explicitly instructed to modify them.
- ✅ Output file is saved with correct name and format (`.xlsx`).
- ✅ No extra sheets or rows/columns are added unless specified.
- ✅ All values match the reference solution at the specified ranges (for verification).
\end{Verbatim}
\end{PTwoSSkill}

\begin{PTwoSSkill}{PTwoSSkillGreen}{SpreadsheetBench: Claude Haiku 4.5}
\begin{Verbatim}[breaklines,breakanywhere,fontsize=\scriptsize]
You edit Excel workbooks with python/openpyxl via shell commands. Read the instruction, inspect the workbook, apply the requested change in place, and save the file.
\end{Verbatim}
\end{PTwoSSkill}

\begin{PTwoSSkill}{PTwoSSkillGreen}{SpreadsheetBench: GPT-5.5}
\begin{Verbatim}[breaklines,breakanywhere,fontsize=\scriptsize]
# Spreadsheet Manipulation Skill

## Core Principles
- Make the smallest edit that satisfies the request. Do **not** change unrelated sheets, cells, formulas, formatting, workbook properties, sheet order, hidden rows/cols, or filters unless requested.
- Always inspect the workbook before editing: sheet names, used ranges, headers, sample rows, data types, formulas, merged cells, and existing formatting conventions.
- Treat natural-language requests as concrete operations on sheets/ranges: add/update/delete rows/columns, compute values, sort/filter, aggregate, reformat, or create sheets.
- Save only the final edited workbook as `output.xlsx`. Re-open it and verify before finishing.

## 1. Inspect Workbook Structure
```python
from openpyxl import load_workbook
wb = load_workbook("input.xlsx")
print(wb.sheetnames)
for ws in wb.worksheets:
    print(ws.title, ws.max_row, ws.max_column)
    for row in ws.iter_rows(min_row=1, max_row=min(ws.max_row,5), values_only=True):
        print(row)
```

Also inspect formulas/styles when relevant:
```python
for ws in wb.worksheets:
    print(ws.title)
    for row in ws.iter_rows(min_row=1, max_row=min(ws.max_row,3)):
        print([(c.coordinate, c.value, c.data_type, c.number_format) for c in row])
```

## 2. Identify Headers Reliably
- Prefer the visible header row near the top; commonly row 1, but inspect first rows.
- Build a header map instead of hardcoding column letters.
- Normalize header text for matching: trim spaces, lowercase, collapse punctuation/underscores/spaces.

```python
import re
def norm(s):
    return re.sub(r'[^a-z0-9]+', '', str(s).strip().lower()) if s is not None else ""

def header_map(ws, header_row=1):
    return {norm(c.value): c.column for c in ws[header_row] if c.value not in (None, "")}

headers = header_map(ws)
# Example: col = headers[norm("Requested Header")]
```

If the request names a sheet/header that is not exact, match normalized names and inspect similar candidates before editing.

## 3. Parse the Request Into Operations
Before coding, decide:
- Target sheet(s). If user says “work only on X”, do not touch others.
- Whether to modify existing data or create a new sheet/column.
- Exact placement of new columns/rows: “at the end” means after current `max_column`; “after X” means insert after header X.
- Whether calculations should be static values or Excel formulas:
  - Use formulas if explicitly requested, or if matching neighboring formula columns.
  - Otherwise compute static values with Python for reliable saved results.
- Blank behavior: if an input needed for a calculation is blank or invalid, leave the output blank unless instruction says otherwise.
- Filtering/deleting means physically delete nonmatching rows, not just apply an AutoFilter, unless user specifically asks for a filter view.

## 4. Preserve Formatting and Workbook Features
- When adding a column next to existing data, copy style/number format from a neighboring comparable column/header.
- Preserve formulas in unaffected cells.
- Avoid using pandas to rewrite sheets unless formatting does not matter; prefer `openpyxl`.
- Be careful with merged cells: inspect `ws.merged_cells.ranges`; do not write into non-top-left merged cells.
- Preserve sheet order. When creating a new sheet, place it only as requested; otherwise append at end.
- If deleting rows, delete from bottom to top to avoid index shifts.

Style copy helper:
```python
from copy import copy
def copy_cell_style(src, dst):
    if src.has_style:
        dst._style = copy(src._style)
    dst.number_format = src.number_format
    dst.font = copy(src.font)
    dst.fill = copy(src.fill)
    dst.border = copy(src.border)
    dst.alignment = copy(src.alignment)
    dst.protection = copy(src.protection)
```

## 5. Robust Value Handling
Use helpers for blanks, numbers, dates, and text comparisons.

```python
from datetime import datetime, date
from openpyxl.utils.datetime import from_excel

def is_blank(v):
    return v is None or (isinstance(v, str) and v.strip() == "")

def text(v):
    return "" if v is None else str(v).strip()

def norm_text(v):
    return re.sub(r'\s+', ' ', text(v)).strip().lower()

def to_number(v):
    if is_blank(v): return None
    if isinstance(v, (int, float)): return v
    s = str(v).strip().replace(",", "").replace("$", "").replace("%", "")
    try: return float(s)
    except ValueError: return None

def to_date(v):
    if is_blank(v): return None
    if isinstance(v, datetime): return v.date()
    if isinstance(v, date): return v
    if isinstance(v, (int, float)):
        try: return from_excel(v).date()
        except Exception: pass
    for fmt in ("%Y-%m-%d", "%m/%d/%Y", "%m/%d/%y", "%d/%m/%Y", "%d-%m-%Y"):
        try: return datetime.strptime(str(v).strip(), fmt).date()
        except ValueError: pass
    return None
```

## 6. Add or Update Columns
```python
from openpyxl import load_workbook
wb = load_workbook("input.xlsx")
ws = wb["SheetName"]

hdr = header_map(ws)
insert_at = ws.max_column + 1           # or hdr[norm("Some Header")] + 1
if insert_at <= ws.max_column:
    ws.insert_cols(insert_at)

header_cell = ws.cell(1, insert_at)
header_cell.value = "New Header"
copy_cell_style(ws.cell(1, insert_at-1), header_cell)

for r in range(2, ws.max_row + 1):
    out = ws.cell(r, insert_at)
    copy_cell_style(ws.cell(r, insert_at-1), out)
    # compute using header-based columns
    a = ws.cell(r, hdr[norm("Input A")]).value
    b = ws.cell(r, hdr[norm("Input B")]).value
    if is_blank(a) or is_blank(b):
        out.value = None
    else:
        out.value = computed_value
```

For percentages, store decimals like `0.85` and set `number_format = '0.00%'` if the sheet uses percent formatting or the request asks for percent display.

## 7. Conditional Edits
- Build predicates using header-based access.
- Normalize text for comparisons unless exact case is explicitly required.
- For numeric/date thresholds, convert values first and treat invalid/blanks according to the instruction.

```python
for r in range(2, ws.max_row + 1):
    row = {h: ws.cell(r, c).value for h, c in hdr.items()}
    target = ws.cell(r, hdr[norm("Target Column")])
    if condition(row):
        target.value = "YES"
    else:
        target.value = "NO"   # only if requested; otherwise leave unchanged
```

## 8. Delete or Keep Rows
- If asked to keep rows matching criteria, delete all rows that do not match.
- Never delete the header row unless explicitly requested.
- Delete bottom-up.

```python
rows_to_delete = []
for r in range(2, ws.max_row + 1):
    keep = predicate(r)
    if not keep:
        rows_to_delete.append(r)

for r in reversed(rows_to_delete):
    ws.delete_rows(r, 1)
```

After deletion, verify `ws.max_row`, remaining criteria, and that formulas/ranges on unrelated sheets were not damaged.

## 9. Sorting
- Sort only the intended data rows, not headers.
- Preserve row integrity by sorting entire row records across all used columns.
- Reapply values/styles row-by-row if necessary.

```python
data = []
for r in range(2, ws.max_row + 1):
    vals = [ws.cell(r,c).value for c in range(1, ws.max_column+1)]
    styles = [copy(ws.cell(r,c)._style) for c in range(1, ws.max_column+1)]
    data.append((sort_key(vals), vals, styles))
data.sort(key=lambda x: x[0])

for i, (_, vals, styles) in enumerate(data, start=2):
    for c, v in enumerate(vals, start=1):
        ws.cell(i,c).value = v
        ws.cell(i,c)._style = styles[c-1]
```

## 10. Lookup / Match Across Sheets
- Inspect both sheets’ headers and key uniqueness.
- Normalize keys only if matching should be case/space-insensitive.
- Decide behavior for missing matches: blank unless instructed otherwise.
- Do not alter lookup/source sheets unless requested.

```python
src_hdr = header_map(src)
dst_hdr = header_map(dst)

lookup = {}
for r in range(2, src.max_row + 1):
    k = norm_text(src.cell(r, src_hdr[norm("Key")]).value)
    lookup[k] = src.cell(r, src_hdr[norm("Value")]).value

for r in range(2, dst.max_row + 1):
    k = norm_text(dst.cell(r, dst_hdr[norm("Key")]).value)
    dst.cell(r, dst_hdr[norm("Output")]).value = lookup.get(k, None)
```

## 11. Aggregates and Summary Sheets
- If creating a summary, remove/replace an existing summary sheet only if requested or clearly the target output.
- Include exactly requested grouping columns and metrics.
- Sort summaries only if requested or implied by examples/instructions.
- Use static aggregate values unless formulas/pivots are explicitly requested.

```python
from collections import defaultdict
groups = defaultdict(lambda: {"count":0, "sum":0})
for r in range(2, ws.max_row + 1):
    key = ws.cell(r, group_col).value
    val = to_number(ws.cell(r, value_col).value) or 0
    groups[key]["count"] += 1
    groups[key]["sum"] += val
```

## 12. Formulas
- If writing formulas, use correct Excel syntax and relative row numbers.
- Do not expect `openpyxl` to calculate formula results.
- Preserve formulas in existing cells; inspect with `data_only=False`.
- If verification needs calculated values but Excel is unavailable, prefer static Python-computed values unless formulas were explicitly requested.

```python
ws.cell(r, out_col).value = f"=IF(A{r}=\"\",\"\",B{r}/C{r})"
```

## 13. Number Formats and Display
- Match neighboring column formats when adding similar data.
- Common safe formats:
  - integer: `0`
  - decimal: `0.00`
  - currency: copy from existing currency column
  - percent: `0.00%`
  - date: copy from existing date column
- Do not round underlying values unless requested; use number format for display precision. If request says round, round the stored value.

## 14. Final Verification
Before final save and after re-opening:
```python
wb.save("output.xlsx")

check = load_workbook("output.xlsx", data_only=False)
# verify target sheets, headers, row counts, sample outputs, formulas
```

Verification checklist:
- Correct workbook saved as `output.xlsx`.
- Only intended sheets/cells changed.
- Required rows were added/deleted/kept correctly.
- New/updated headers are spelled exactly as requested.
- New columns are in the exact requested positions.
- Calculations handle blanks, invalid values, zero denominators, dates, and text normalization correctly.
- Formatting is preserved or matched where visible.
- Existing formulas outside the edit area remain unchanged.
- No accidental extra sheets, rows, columns, filters, or index columns were created.
\end{Verbatim}
\end{PTwoSSkill}

\end{document}